\theoremstyle{plain}
\newtheorem{theorem}{Theorem}[section]
\theoremstyle{definition}
\theoremstyle{remark}
\icmltitlerunning{ParallelComp: Parallel Long-Context Compressor for Length Extrapolation}
\begin{document}

\twocolumn[
\icmltitle{ParallelComp: Parallel Long-Context Compressor for Length Extrapolation}



\icmlsetsymbol{equal}{*}

\begin{icmlauthorlist}
\icmlauthor{Jing Xiong}{equal,yyy}
\icmlauthor{Jianghan Shen}{equal,comp}
\icmlauthor{Chuanyang Zheng}{ccc}
\icmlauthor{Zhongwei Wan}{ddd}
\icmlauthor{Chenyang Zhao}{eee}
\icmlauthor{Chiwun Yang}{fff}
\icmlauthor{Fanghua Ye}{ggg}
\icmlauthor{Hongxia Yang}{hhh}
\icmlauthor{Lingpeng Kong}{yyy}
\icmlauthor{Ngai Wong}{yyy}
\end{icmlauthorlist}

\icmlaffiliation{yyy}{The University of Hong Kong, }
\icmlaffiliation{comp}{Nanjing University, }
\icmlaffiliation{ccc}{The Chinese University of Hong Kong, }
\icmlaffiliation{ddd}{The Ohio State University, }
\icmlaffiliation{eee}{The University of California, Los Angeles, }
\icmlaffiliation{fff}{Sun Yat-Sen University, }
\icmlaffiliation{ggg}{Tencent, }
\icmlaffiliation{hhh}{Hong Kong Polytechnic University}

\icmlcorrespondingauthor{Jing Xiong}{ junexiong@connect.hku.hk}
\vspace{-2mm}
\icmlkeywords{Machine Learning, ICML}

\vskip 0.3in
]



\printAffiliationsAndNotice{\icmlEqualContribution} 

\begin{abstract}
Extrapolating ultra-long contexts (text length $>128$K) remains a major challenge for large language models (LLMs), as most training-free extrapolation methods are not only severely limited by memory bottlenecks, but also suffer from the attention sink, which restricts their scalability and effectiveness in practice. In this work, we propose \textsc{ParallelComp}, a parallel long-context compression method that effectively overcomes the memory bottleneck, enabling 8B-parameter LLMs to extrapolate from 8K to 128K tokens on a single A100 80GB GPU in a training-free setting. \textsc{ParallelComp} splits the input into chunks, dynamically evicting redundant chunks and irrelevant tokens, supported by a parallel KV cache eviction mechanism. Importantly, we present a systematic theoretical and empirical analysis of attention biases in parallel attention—including the attention sink, recency bias, and middle bias—and reveal that these biases exhibit distinctive patterns under ultra-long context settings. We further design a KV cache eviction technique to mitigate this phenomenon. Experimental results show that \textsc{ParallelComp} enables an 8B model (trained on 8K context) to achieve $91.17\%$ of GPT-4's performance under ultra-long contexts, outperforming closed-source models such as Claude-2 and Kimi-Chat. We achieve a 1.76x improvement in chunk throughput, thereby achieving a 23.50x acceleration in the prefill stage with negligible performance loss and pave the way for scalable and robust ultra-long contexts extrapolation in LLMs. We release the code at \url{https://github.com/menik1126/ParallelComp}.
\end{abstract}

\vspace{-6mm}
\section{Introduction}
\label{intro}

Extrapolating long-contexts is a core capability of large language models (LLMs)~\citep{achiam2023gpt, touvron2023llama, dubey2024llama}. Achieving this requires effective mechanisms for modeling positional relationships over extended sequences. Rotary position embedding (RoPE)~\citep{rerope2023} are commonly employed in LLMs due to their ability to efficiently encode relative positional information. However, extrapolating to lengths beyond the training range during inference remains a significant challenge. While retraining or fine-tuning the entire model is one possible solution, it is not always feasible, particularly in resource-constrained environments~\citep{alibi,chi2022kerple, peng2023yarn, fu2024data}. This limitation prevents RoPE-based models from generalizing to longer sequences without additional training, making them less adaptable to scenarios requiring inference on inputs of unforeseen lengths.

\begin{figure}[t]
\vskip 0.2in
\begin{center}
\centerline{\includegraphics[width=\columnwidth]{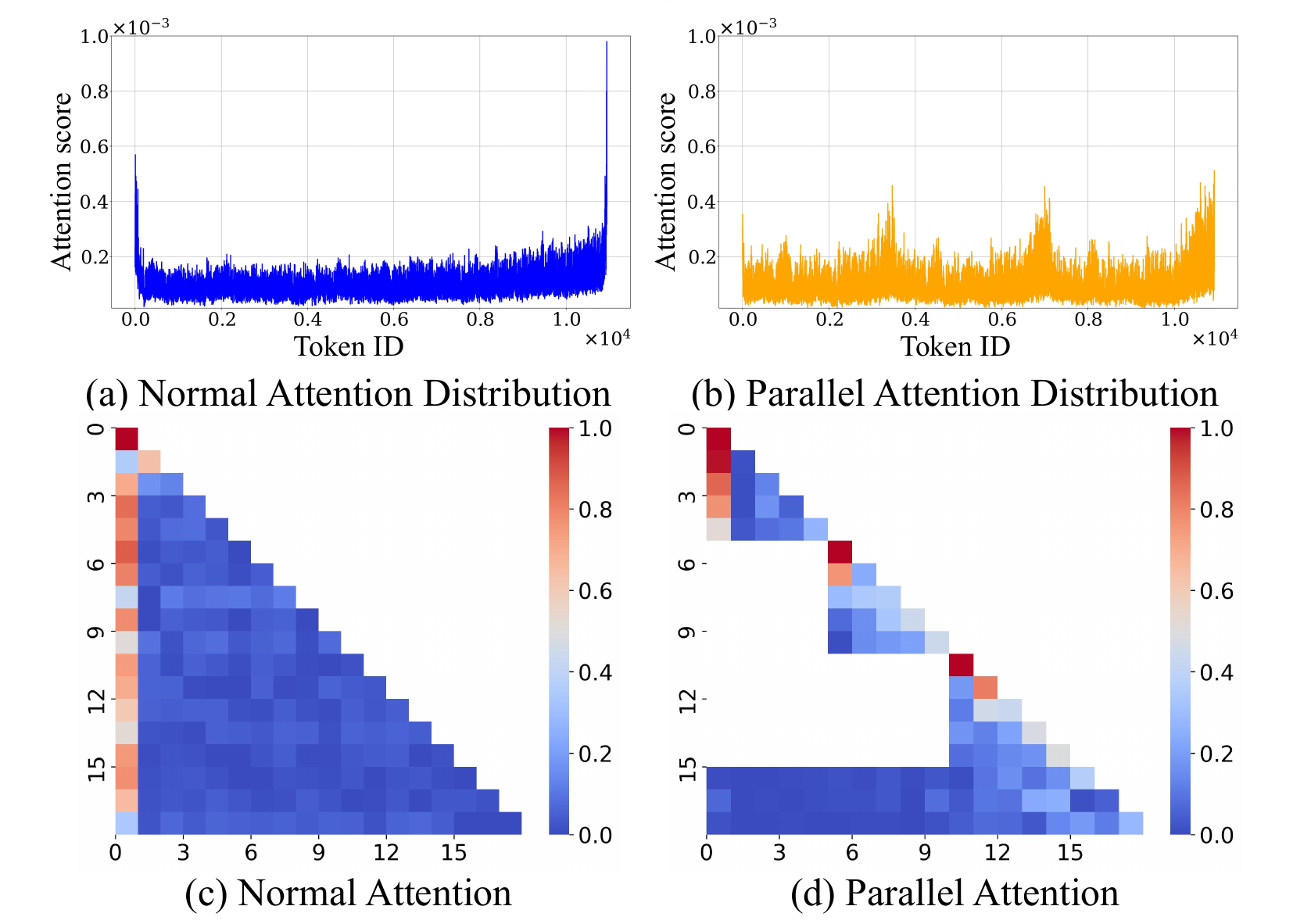}}
\caption{\textbf{Upper:} Distribution of two types of attention. \textbf{Lower:} Heatmaps of the two types of attention. The attention head shown here represents the distribution change between normal attention and parallel attention for the 21st head in layer 1, using the Llama-2-7b-chat-hf model.}
\label{fig:parallel_attention_overview}
\end{center}
\vspace{-10mm}
\end{figure}

To address this, many researchers explore training-free approaches for context length extrapolation. One prominent direction involves incorporating NTK-based methods~\citep{fixedNTK, dynamicNTK, NTKByParts2023}, which distinguish dimensions of different frequencies by their wavelengths and apply tailored interpolation strategies to address the extrapolation challenge of position encoding. Another promising strategy leverages text chunking techniques~\citep{an2024training, xiao2024infllm, ratner2022parallel, zhu2024accelerating}, which typically reuse position encodings across chunks to enable chunk-wise length extrapolation without retraining, thereby improving the model's ability to generalize to longer contexts. However, both approaches suffer from the attention sink phenomenon~\cite{liu2024lost, xiao2023efficient, han2024lm, gu2024attention}, where high attention scores tend to be assigned to the first few tokens or the last few tokens in the input sequence. Figure~\ref{fig:parallel_attention_overview} illustrates this phenomenon and compares the attention distributions between standard and parallel attention mechanisms. In the upper part of Figure~\ref{fig:parallel_attention_overview}, we observe the distribution of attention scores across tokens. In the standard attention distribution (left), corresponding to the heatmaps (c) below the image, attention is disproportionately focused on the first and last few tokens. In contrast, the parallel attention distribution (right), shown in the heatmaps (d) below the image, attempts to distribute attention more evenly. However, it still shows noticeable concentrations in certain regions and exhibits a multi-peak distribution, indicating that the nature of the attention sink differs fundamentally between the two mechanisms. The phenomenon within the length extrapolation mechanism in the parallel attention mechanism remains unexplored. This paper focuses on addressing the memory limitations encountered during length extrapolation and provides a detailed analysis of the unique attention sink that emerges in parallel attention. Specifically, we extrapolate the length by chunking the input and reusing position encodings, applying parallel compression of the KV cache of the chunks and tokens to resolve memory bounds, and exploring the unique phenomenon of the attention sink in parallel attention while attempting to adopt a token eviction strategy to mitigate this bias. To promote the understanding of the effects of special attention sink on parallel attention, we propose the following questions in this paper:  \textbf{Q1:} \textit{What types of attention patterns can be summarized?} \textbf{Q2:} \textit{Is there any difference between the attention bias in parallel attention and the attention bias in classical attention?} \textbf{Q3:} \textit{Can the calibration strategy alleviate attention bias?}  \textbf{Q4:} \textit{Does our parallel compression strategy effectively support length extrapolation?}

To address the above questions, we propose \textsc{ParallelComp}, a parallel long-context compression method to extrapolate length. While maintaining high throughput, we extrapolate the length from 8K to 128K on a single GPU, with almost no performance loss. Overall, our contributions are as follows:

        
         

\begin{itemize}
    \item We propose \textsc{ParallelComp}, a novel training-free method that enables efficient length extrapolation for LLMs — scaling from 8K to up to 128K tokens on a single A100 80GB GPU — by addressing attention biases through an effective attention calibration strategy.

    \item To overcome limitations in parallel attention, we introduce a chunk eviction mechanism and parallel KV cache eviction, allowing processing of contexts beyond 128K tokens and achieving a 1.76x throughput improvement and a 23.50x speedup in the prefilling stage, with negligible performance loss.

    \item Experiments demonstrate that \textsc{ParallelComp} achieves \textbf{91.17\% of GPT-4's performance} on ultra-long context tasks using an 8B model trained on only 8K-length context, outperforming strong proprietary models such as Claude-2 and Kimi-Chat.
\end{itemize}

\begin{figure*}[!h]
\vskip 0.2in
\begin{center}
\centerline{\includegraphics[width=\textwidth]{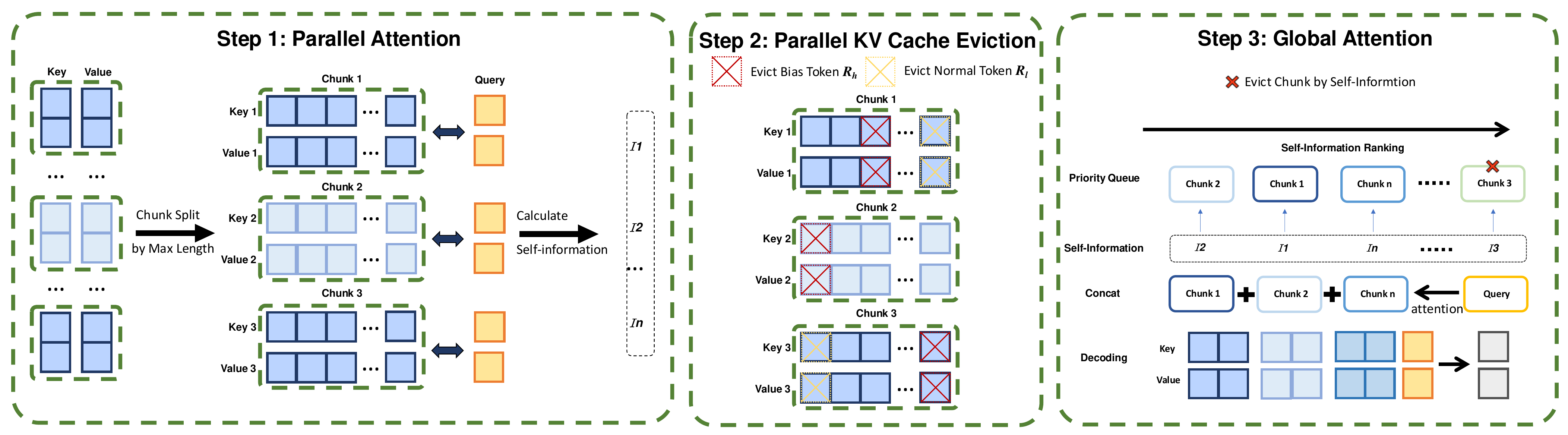}}
\caption{Overview of \textsc{ParallelComp}. \textbf{Parallel Attention} – The input sequence is split into multiple chunks based on the model's maximum context length. Each chunk undergoes local attention computation independently, and the self-information score of the query is calculated. \textbf{Parallel KV Cache Eviction} – Based on the self-information score, low-score tokens (marked in yellow, \( R^h_L \)) and attention bias tokens (marked in red, \( R^h_H \)) are selectively evicted to optimize memory usage and attention bias. \textbf{Global Attention} – The remaining KV caches are ranked by self-information, and less relevant chunks are discarded. The selected chunks are then concatenated, and a global attention operation is applied to ensure comprehensive information aggregation before the final autoregressive decoding stage.  }
\label{fig:overview}
\end{center}
\vspace{-6mm}
\end{figure*}

\section{Related Work}
\label{Related Work}
\paragraph{Position encoding.}

Existing absolute position encoding (APE) \citep{vaswani2017attention, devlin2018bert} incorporates either fixed or learnable position encodings into input representations through vector addition. However, APE faces challenges when dealing with long-contexts. To overcome these limitations, relative position encoding (RPE) methods—such as rotary and additive position encodings~\citep{rerope2023, su2024roformer, press2021train}—are developed to offer improved generalization in longer contexts. In addition, some data-dependent position encoding methods~\citep{golovneva2024contextual, zheng2024dape} are gaining widespread attention. These position encodings are generated based on the input data. 


Some works~\citep{alibi, chi2022kerple, li2023functional} also focus on designing better position encodings to enhance the pre-trained model's capability for length extrapolation. Another line of works~\citep{peng2023yarn,NTKByParts2023,chen2023clex} focus on enhancing the LLM's length extrapolation capability by fine-tuning. 

Furthermore, there are two categories of training-free extrapolation methods. The first category, such as \citet{fixedNTK,chen2023extending,dynamicNTK,chen2024hope}, directly modifies position encodings to enable extrapolation or interpolation, aiming to enhance the model's length extrapolation capability. The second category~\citep{an2024training, xiao2024infllm, ratner2022parallel, zhu2024accelerating} achieves extrapolation solely by reusing position encodings. In this work, we focus primarily on the training-free setting.

\vspace{-4mm}

\paragraph{Attention sink.}
A series of studies~\citep{gu2024attention, xiao2023efficient, liu2024lost} reveal the phenomenon of attention sink, where certain tokens in the sequence (referred to as sink tokens) consistently receive abnormally high attention scores.

When the sequence length increases, the attention scores in the middle of the sequence are significantly lower compared to the beginning and the end. Some work~\citep{xiao2023efficient, zhang2023h2o,wan2024d2o, xiong2024uncomp} find that the tokens of attention sink can effectively aggregate information and use it to efficiently compress the KV cache. But in ultra-long contexts, this may prevent the model from focusing on the correct parts of long sequences. \citet{xiao2023efficient} attributes this phenomenon to the softmax function, which forces the query to assign attention scores to all preceding tokens, even when the preceding tokens lack essential information, resulting in high scores being assigned to initial tokens. \citet{gu2024attention} proposes a simple solution by replacing the softmax attention with alternative attention mechanisms (e.g., unnormalized sigmoid attention) to alleviate this dependency. \citet{chen2024hope} alleviates this phenomenon by simply removing certain low-frequency components from RoPE. \citet{yu2024unveiling} deals with this issue by calibrating the attention distribution. In our work, we focus primarily on three phenomena of attention bias within parallel attention: the attention sink at the beginning of the input, the attention sink at the end of the input (i.e. recency bias~\citealt{peysakhovich2023attention}), and the scattered attention in the middle of the input~\citep{yu2024unveiling}. These biases provide insights into how LLMs utilize parallel contextual information.
\section{Method}
\label{Method}
In this section, we introduce \textsc{ParallelComp}, our proposed approach for achieving efficient long-context compression during extrapolation. We then discuss its unique bias phenomenon. Figure~\ref{fig:overview} offers a high-level overview of \textsc{ParallelComp}. 

\subsection{ParallelComp}
\label{ParallelComp}

\paragraph{Parallel attention.}

Inspired by previous studies~\citep{an2024training}, we divide the text into chunks based on the model's maximum context size and concatenate them with the input query for parallel encoding. This step is typically carried out using local attention. For a given input sequence $X \in \mathbb{R}^{N \times d}$, the sequence is split into $ C = \lceil N / w \rceil$ chunks, each containing at most $w$ tokens (the maximum context length per chunk). Let $f_Q(\cdot)$, $f_K(\cdot)$, and $f_V(\cdot)$ denote the linear transformation functions for the query, key, and value projections, respectively. Attention computation is then performed within each chunk:

\begin{equation}
A^{c}_\mathfrak{l} = \text{Softmax}\left(\frac{f_Q({X^{c}}) \cdot f_K(X^{c})^T}{\sqrt{d}}\right),
\end{equation}
where \( X^{c} \in \mathbb{R}^{w \times d} \) represents the \( c \)-th chunk of the input sequence and \( A^{c}_\mathfrak{l} \in \mathbb{R}^{w \times w} \) is the corresponding attention matrix. The feature update is performed for each chunk:
\begin{equation}
F^{c} = A^{c}_\mathfrak{l} \cdot f_V(X^{c}),
\end{equation}
where \( F^{c} \in \mathbb{R}^{w \times d} \) is the updated feature for the \( c \)-th chunk.

Below, we discuss how to design chunk eviction strategies and parallel KV cache eviction strategies to maintain high throughput while minimizing redundant computations.

\paragraph{Chunk eviction.}

To ensure that the computation of parallel attention can be performed on a single 80GB A100 GPU, we design a chunk eviction strategy to control memory overhead as shown in Figure~\ref{fig:overview} step 3. Inspired by \citet{li2023compressing,jiang2023longllmlingua}, we introduce a chunk eviction mechanism that leverages the self-information of the query tokens \( X^q \) to further enhance parallel processing efficiency. This mechanism employs an online priority queue to manage memory, retaining only the most relevant chunks with the lowest perplexity, thereby improving language modeling. For a given chunk \( c \), the self-information score for the query tokens \( X^q \) is calculated as follows: 
\begin{equation}
I_c = -\log P({X^q} \mid X^c),
\end{equation}
where \( X^c \) represents the context of chunk \( c \) and \( X^q \) corresponds to the chunk of the query. Chunks with lower self-information scores are considered more relevant and are retained. The set of indices \( c \) corresponding to the selected chunks is denoted by:  
\begin{equation}
S = \{ c \mid c \leq \epsilon \},  
\end{equation}  
where \( \epsilon \) is a threshold that determines whether a chunk will be selected or not. The selected chunks are sorted based on \( I_c \), and only a fixed number of top-ranked chunks are retained. These selected chunks are stored in a fixed-size priority queue to ensure that the prefilling stage remains within the memory limit.

\paragraph{Parallel KV cache eviction.}

To further improve chunk throughput, we propose a KV cache eviction strategy, as illustrated in Step 2 of Figure~\ref{fig:overview}. We leverage Flash Attention~\citep{dao2023flashattention} for efficient attention computation. Prior to performing local attention, we use the cumulative scores of \( X^q \) to quickly identify tokens with relatively low attention importance and evict them. Specifically, the local attention score \( A^c_\mathfrak{l} \) between the \( i \)-th token of the query \( X^q_i \) and the \( j \)-th token of the chunk \( X^c_j \) is calculated as:

\begin{equation}
A^c_\mathfrak{l(i,j)} = f(X^q_i, X^c_j),
\end{equation}

where \( f \) represents the matrix multiplication used to compute the attention score between \( X^q_i \) and \( X^c_j \). Then, the cumulative attention score for the \( j \)-th token in the \( c \)-th chunk is computed by summing the local attention scores over all tokens in the query chunk:

\begin{equation}
\label{local_attention}
S_{c,j} = \sum_{i=1}^{w_q} A^c_\mathfrak{l(i,j)}, \quad j = 1, 2,...,w,
\end{equation}

where \( S_{c,j} \in \mathbb{R}^{w} \) denotes the cumulative attention score for the \( j \)-th token in the \( c \)-th chunk, and \( w_q \) is the length of the query chunk $X^q$. The cumulative attention score aggregates the attention distributions from each token in the query to each token in the chunk, thereby measuring the relevance of each token in the chunk to the query $X^q$. Tokens with low cumulative attention scores within the chunk are evicted, and the retained tokens are used to form the compressed KV cache:
\begin{equation}
K^h_r = K^h_x[{R^h_L}], \quad V^h_r = V^h_x[R^h_L],
\end{equation}
where \( K^h_x \) and \( V^h_x \) represent the KV cache of the $h$-th head of the input chunk, and \( R^h_L \) denotes the set of indices corresponding to the evicted tokens in the $h$-th head with low attention scores. The notation \( [\cdot] \) indicates indexing into \( K^h_x \) and \( V^h_x \) to evict only the tokens corresponding to the indices in \( R^h_L \). $K^h_r$ and $V^h_r$ denote the retained KV cache. The compression strategy of the KV cache typically helps reduce memory overhead while increasing chunk throughput, but it often exacerbates attention bias. Next, we introduce a simple and effective strategy to calibrate attention distribution.

\paragraph{Attention calibration.} 

To mitigate the attention bias exacerbated by \textit{parallel KV cache eviction}, we propose an alternative token eviction strategy based on Eq.~\ref{local_attention}. Specifically, we evict tokens with excessively high attention scores. Let \( R^h_H \) represent the tokens with attention scores of $h$-th head exceeding a manually-set threshold \( \lambda \). Thus, we have:
\begin{equation}
K^h_{r'} = K^h_x[{R^h_H}], \quad V^h_{r'} = V^h_x[R^h_H].
\end{equation}
 Evicting tokens with exceptionally high scores guarantees that the attention mechnism can produce calibrated attention distributions. We will thoroughly investigate the impact of this calibration method on the attention bias in Section~\ref{Empirical Study of Parallel Attention Bias}.

\paragraph{Global attention.}  

After obtaining the attention outputs for each chunk, we concatenate the KV caches from all chunks into a unified representation. Specifically, the concatenated KV cache is given by: 
\begin{equation}
\small
\begin{aligned}
K =& \big[K^{X^1}, K^{X^2}, \dots, K^{X^C}, K^{X^q}\big], \\
V =& \big[V^{X^1}, V^{X^2}, \dots, V^{X^C}, V^{X^q}\big],
\end{aligned}
\end{equation}
where \( K^{X^c} \) and \( V^{X^c} \) are the KV caches of the \( c \)-th chunk.

Next, we perform a global attention operation. This global attention enables the model to aggregate information across all chunks, ensuring that global dependencies are captured. The global attention computation for \( X^q \) is given by:
\begin{equation}
A_{\mathfrak{g}} = \text{Softmax}\left(\frac{f_Q(X^q) \cdot K^T}{\sqrt{d}}\right),
\end{equation}
where \( A_{\mathfrak{g}} \in \mathbb{R}^{w_q \times (C \cdot w + w_q)} \) is the global attention score matrix for the query chunk. The corresponding output of the global attention is computed as:
\begin{equation}
F_{\mathfrak{g}} = A_{\mathfrak{g}} \cdot V,
\end{equation}
where \( F_{\mathfrak{g}} \in \mathbb{R}^{w_q \times d} \) represents the globally updated features for the query chunk. Finally, the updated global representation is passed through the decoding stages, enabling the model to generate outputs while effectively leveraging information from all chunks.

\vspace{-1mm}
\subsection{Parallel Attention Bias}
\label{Parallel Attention Bias}

\paragraph{Theoretical insights into parallel attention bias.}\label{sec:parallel_attn_collapse}

In this section, we develop a theoretical framework to understand \textit{Parallel Attention Bias}, extending the concept of attention collapse~\citep{dong2021attention} to parallel attention as described in Section~\ref{ParallelComp}. We focus on the sparsity behavior of the local attention matrices computed over parallel chunks and examine its impact on both efficiency and accuracy.

\begin{theorem}\label{thm:parallel_attn_collapse_main}
    Consider the following setup:
      \vspace{-2mm}
    \begin{itemize}
        \item {\bf Part 1:} For any \(\epsilon > 0\), the sparsity threshold of effective entries in \(A^{c}_\mathfrak{l}\) decreases as \(w\) increases. \( \epsilon \) represents a user-defined threshold controlling sparsity in the attention matrix. As the number of chunks (\( C \)) increases, \( \epsilon \) governs the trade-off between preserving information within each chunk and computational efficiency.
        \item {\bf Part 2:} The number of effective entries \(k\) in each row of \(A^{c}_\mathfrak{l}\) is upper-bounded by:
        \[
        k \leq w - \exp\left(O\left(\frac{\log^2(\epsilon \cdot w)}{R^2}\right)\right) \cdot \frac{\delta}{wd},
        \] 
        where \(R\) is the rank of the sparse attention matrix, influencing the effective dimensionality of retained attention entries, and \(\delta\) is a probability bound controlling the confidence level of the sparsity constraint. 
        \item {\bf Part 3:} With high probability (\(1 - \delta\)), the number of ineffective entries in each row satisfies:
        \[
        \lim_{w \to \infty} | \mathcal{S}_\epsilon^{(c)}(A^{c}_\mathfrak{l}[i,:]) | = w - k.
        \]
    \end{itemize}
\end{theorem}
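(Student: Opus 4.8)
The plan is to convert the entry-counting problem into a statement about how many logits in a single row of the rank-\(R\) logit matrix \(M = f_Q(X^c)\, f_K(X^c)^\top/\sqrt{d}\) can be large, and then to control that count via concentration. First I would fix a row \(i\) and unpack the definition of an effective entry: since \(A^{c}_\mathfrak{l}[i,j] = \exp(M_{ij}) / \sum_{l} \exp(M_{il})\), an entry is effective (\(A^{c}_\mathfrak{l}[i,j] \geq \epsilon\)) only if its logit exceeds the log-partition by at most \(\log(1/\epsilon)\). Lower-bounding the partition function by \(w\) times the geometric mean of the exponentials, I would show that effectiveness forces \(M_{ij}\) to sit at least \(\log(\epsilon w)\) above the row baseline. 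This reduces Parts 2 and 3 to counting how many coordinates of a rank-\(R\) row vector can exceed a threshold that grows like \(\log(\epsilon w)\).

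Next I would exploit the low-rank structure together with a sub-Gaussian assumption on the projected features (the standard regime in which attention-collapse results are stated). Because \(M\) has rank \(R\), each row lives in an \(R\)-dimensional subspace, so the large coordinates are not independent but are governed by at most \(R\) effective directions. I would bound the probability that a fixed coordinate exceeds the threshold using a sub-Gaussian tail whose effective variance scales with \(R\); evaluated at threshold \(\log(\epsilon w)\) this produces a factor of the form \(\exp\!\big(\Theta(\log^2(\epsilon w)/R^2)\big)\). A covering / \(\epsilon\)-net argument over the rank-\(R\) ball (whose metric entropy scales like \(R\)) then turns this per-coordinate estimate into a bound on the number of simultaneously large coordinates, and a union bound over the \(w\) coordinates introduces the confidence level \(1-\delta\) together with the dimensional normalization \(\delta/(wd)\). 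Rearranging the resulting inequality yields exactly the claimed bound \(k \leq w - \exp\!\big(O(\log^2(\epsilon w)/R^2)\big)\cdot \delta/(wd)\) of Part 2, since the count of ineffective entries \(w-k\) is bounded below by the same quantity.

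For Part 1, I would observe that the threshold \(\log(\epsilon w)\) is monotonically increasing in \(w\), so the admissible fraction of effective entries is squeezed as \(w\) grows; this gives the qualitative sparsification claim directly from the threshold characterization of the first step, without the full probabilistic machinery. For Part 3, I would take \(w \to \infty\) in the Part 2 bound: since \(\exp\!\big(\Theta(\log^2(\epsilon w)/R^2)\big) = (\epsilon w)^{\Theta(\log(\epsilon w)/R^2)}\) grows faster than any polynomial in \(w\), the count of ineffective entries diverges, and by the definition of \(\mathcal{S}_\epsilon^{(c)}\) the cardinality \(|\mathcal{S}_\epsilon^{(c)}(A^{c}_\mathfrak{l}[i,:])|\) converges to \(w-k\) with probability \(1-\delta\).

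I expect the main obstacle to be the second step, namely pinning down the exponent \(\log^2(\epsilon w)/R^2\). The delicate point is relating the matrix rank \(R\) to the effective variance of the projected logits and controlling the dependence between a row's coordinates via chaining over the rank-\(R\) subspace; the log-squared term emerges from applying the sub-Gaussian tail at the logarithmically growing threshold, and ensuring that the covering entropy, the union bound, and the \(\delta/(wd)\) normalization compose consistently — including the sign of the exponent and the regime of \(w\) in which the bound is informative — is where the bookkeeping is most error-prone.
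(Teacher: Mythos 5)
Your proposal does not follow the paper's route, and its central step has a genuine gap. The paper's proof (Appendix~\ref{sec:parallel_attn_collapse_aa}) never uses low-rankness of the logit matrix: it postulates that, because of relative position encodings, the logits decay linearly in positional distance, $q_i \cdot k_j \approx -\alpha\,\mathrm{d}(i,j) + Z_{i,j}$ with $Z_{i,j}$ sub-Gaussian (Theorem~\ref{thm:exp_decay}), so attention weights decay exponentially in distance at rate $R = \alpha/\sqrt{d}$. In the paper's proofs $R$ is this decay-rate parameter, later set to $O(\sqrt{\log w})$ --- it is not the rank of anything; the statement's phrase ``rank of the sparse attention matrix'' is a mislabel that your reading inherited. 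A union bound over the $w(w-k)$ entries combined with the Gaussian tail then yields $\epsilon \geq \exp\bigl(-O(R)\sqrt{\log(w(w-k)/\delta)}\bigr)$ (Theorem~\ref{thm:tail_bound_attention}), and Parts~2 and~3 follow by taking logarithms, squaring, and solving for $k$ (Theorem~\ref{thm:parallel_attn_collapse}), with the $d$ in $\delta/(wd)$ injected by a concentration-in-dimension remark. The key structural point is that the exponent $\log^2(\epsilon w)/R^2$ arises from inverting a relation in which the decay rate $R$ multiplies the sub-Gaussian deviation; it is a consequence of the distance-decay model, which your proposal never invokes.

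The gap in your plan is the second step. Low rank of $f_Q(X^c)f_K(X^c)^\top$ by itself places no restriction on how many coordinates of a row can be large (a rank-one matrix can have every logit identical and enormous), so all of the work must be done by your claim that the logits are sub-Gaussian with ``effective variance scaling with $R$.'' That claim is precisely what would need to be proved, and nothing in the setup supplies it: the rank of the logit matrix has no canonical relation to the variance proxy of its entries, and a covering argument over the rank-$R$ ball controls metric entropy, not the per-coordinate tail at threshold $\log(\epsilon w)$. In other words, the factor $\exp\bigl(\Theta(\log^2(\epsilon w)/R^2)\bigr)$ in your sketch is reverse-engineered from the target bound rather than derived. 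Your first step --- the AM--GM lower bound on the partition function, forcing effective logits to sit at least $\log(\epsilon w)$ above the row baseline --- is sound, and is arguably cleaner than anything in the paper's own argument; note also that since a softmax row sums to one, at most $\lfloor 1/\epsilon \rfloor$ entries can ever exceed $\epsilon$, which gives an elementary sparsity statement but cannot produce the specific $R$- and $d$-dependent form of Part~2. To recover that form you would have to import a structural assumption tying the tail width to $R$ --- the paper's positional decay, or explicit factor-norm/incoherence bounds --- and as written your argument does not.
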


\begin{proof}[Proof Sketch of Theorem~\ref{thm:parallel_attn_collapse_main}]
{\bf Proof sketch of Part 1:} By utilizing the exponential decay property of local attention weights (as derived in Theorem~\ref{thm:exp_decay}), the sparsity threshold for effective entries in \( A^{c}_\mathfrak{l} \) can be bounded by:
\[
\epsilon \geq \exp\left(O(R) \cdot \sqrt{\log(w \cdot (w-k)/\delta)}\right).
\]
This inequality indicates that as \( w \) increases, the threshold for retaining effective entries becomes stricter, thus limiting the number of such entries.

{\bf Proof sketch of Part 2:} Rearranging the above inequality, we derive an upper bound on \( k \), the number of effective entries:
\[
k \leq w - \exp\left(O\left(\frac{\log^2(\epsilon \cdot w)}{R^2}\right)\right) \cdot \frac{\delta}{wd}.
\]
Thus, the number of effective entries in each row of the attention matrix is \( w - k \). 

{\bf Proof sketch of Part 3:} Substituting the bound on \( k \) into the definition of \( |\mathcal{S}_\epsilon^{(c)}| \), the number of ineffective entries, we obtain:
\[
\lim_{w \to \infty} | \mathcal{S}_\epsilon^{c}(A_\mathfrak{l}^{c}[i,:]) | \geq w - k.
\]
Finally, observing that \( R = O(\sqrt{\log(w)}) \) ensures that the sparsity growth is bounded as \( w \to \infty \). A more detailed proof is available in Appendix~\ref{sec:parallel_attn_collapse_aa}.
\end{proof}

\vspace{-6mm}

\begin{figure}[t]
\begin{center}
\centerline{\includegraphics[width=0.5\textwidth]{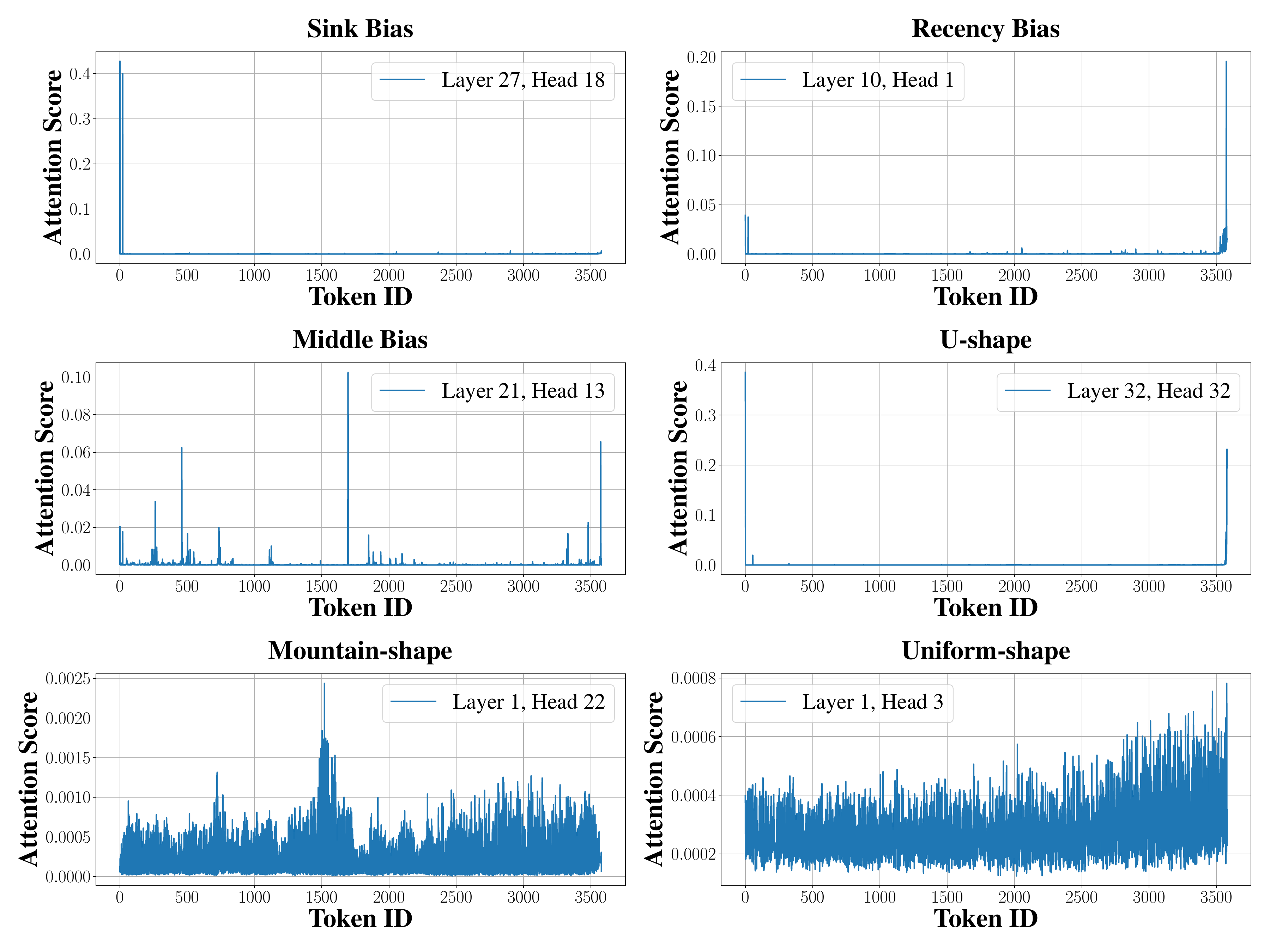}}
\caption{Several types of attention distribution. The Token ID represents the token position in the input text.}
\label{fig:attention_bias_pattern}
\end{center}
\vspace{-5mm}
\end{figure}

\begin{figure}[!t]
\begin{center}
\centerline{\includegraphics[width=0.5\textwidth]{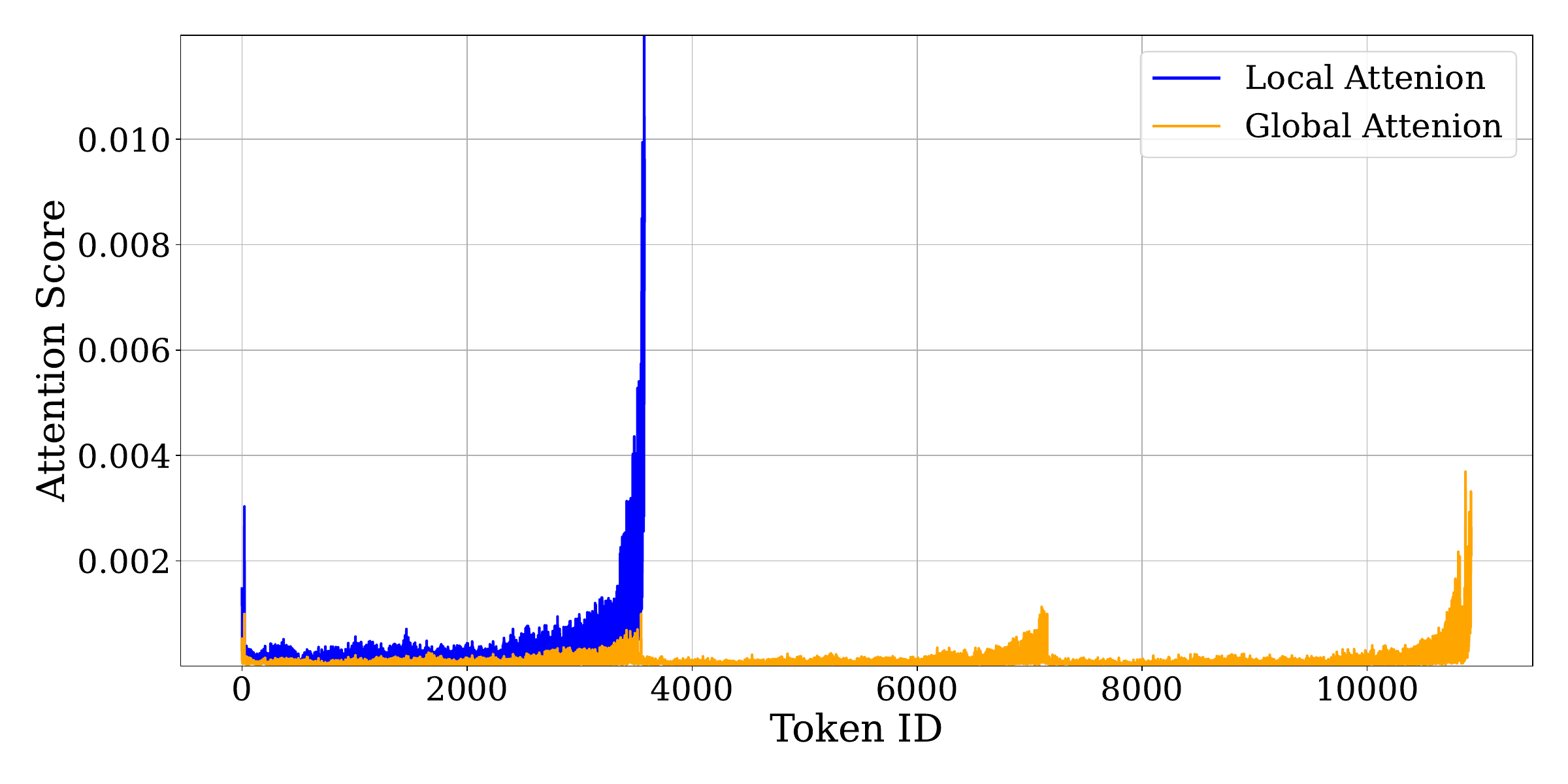}}
\caption{Comparison of local and parallel attention patterns. The blue lines show the local attention distribution within a chunk, while the yellow lines represent the parallel attention patterns in global attention.}
\label{fig:parrallel_attention}
\end{center}
\end{figure}

\begin{figure*}[!t]
\begin{center}
\includegraphics[width=1.0\textwidth]{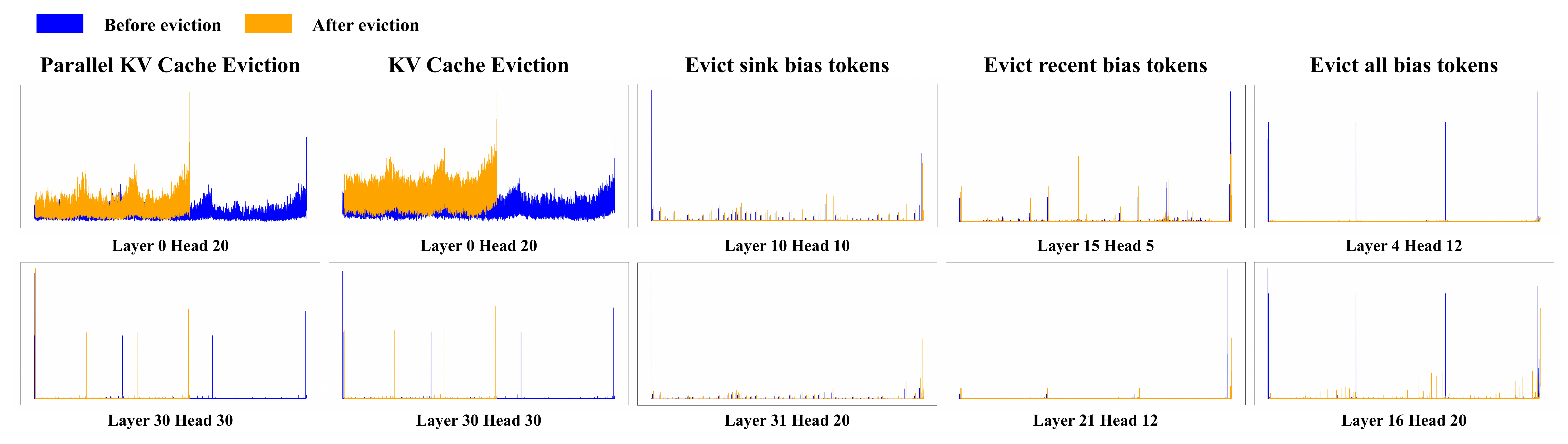}
\caption{Several types of attention bias and patterns. In the figure, \textbf{Parallel KV Cache Eviction} performs independent KV cache eviction within each chunk, while \textbf{KV Cache Eviction} unifies this process during global attention. \textbf{Parallel KV Cache Eviction} significantly reduces the computational load of global attention.}
\label{fig:attention_pattern_comparison}
\end{center}
\vspace{-2mm}
\end{figure*}
\paragraph{Discussion.}

Theorem~\ref{thm:parallel_attn_collapse_main} emphasizes the inevitability of attention collapse in parallel attention. If we fix the sparsity threshold \( \epsilon \) and keep the number of chunks \( C \) constant, as the input sequence length increases, the effective number of attention entries within each chunk decreases as the chunk size \( w \) increases, despite partitioning the input sequence into \( C \) chunks. The key insights include: \textit{i)} Each local attention matrix \( A^{c}_\mathfrak{l} \) exhibits sparsity behavior akin to the global attention matrix, with most entries becoming negligible for large \( w \). \textit{ii)} When a long sequence is processed in parallel, {\it attention bias} becomes unavoidable, with the attention mechanism consistently focusing on a small subset of tokens due to its inherent limitations, even when more information is available. Choosing an appropriate sparsity parameter \( \epsilon \) can mitigate this issue. \textit{iii)} Dividing the input into chunks reduces computational overhead while preserving sparsity within each chunk, leading to an efficient approximation of global attention.

\section{Empirical Study of Parallel Attention Bias}
\label{Empirical Study of Parallel Attention Bias}
In this section, we investigate the attention sink phenomenon in parallel attention and compare its similarities and differences with the regular attention sink phenomenon. Specifically, we explore the following question:  

\vspace{-2mm}

\paragraph{Q1: What types of attention patterns can be summarized?}
 
In summary, three main types of attention patterns emerge, as illustrated in Figure~\ref{fig:attention_bias_pattern}: U-shape, Mountain-shape, and Uniform-shape.

\vspace{-3mm}
\paragraph{Observations.} These attention distributions give rise to three corresponding biases: \textit{i)} Attention sink, where focus is concentrated on the initial few tokens. \textit{ii)} Recency bias, where attention is more strongly concentrated at the tail. \textit{iii)} Middle bias, where attention is disproportionately focused on a few tokens in the middle of a sequence. \textit{iv)} These biases manifest in a wavelike pattern, with \( R_H \) containing three token types (\( R_s, R_m, R_r \)) corresponding to these biases.

\paragraph{Q2: Is there any difference between the attention bias in parallel attention and the attention bias in classical attention?}
In this part, we provide a detailed analysis of bias in parallel attention. We observe in Figure~\ref{fig:parrallel_attention} that there are relatively more peaks within the contexts compared to the classic attention mechanism.



\begin{figure}[t]
\begin{center}
\centerline{\includegraphics[width=0.5\textwidth]{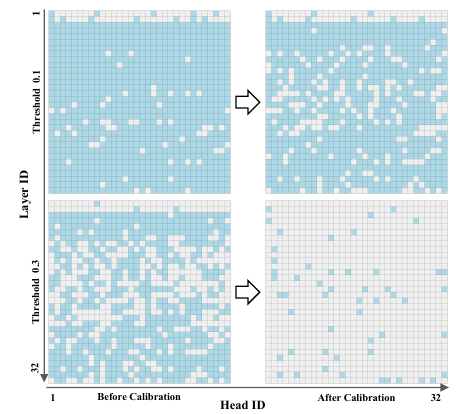}}
\caption{The distribution of tokens with abnormally high attention scores. Blue represents outliers.}
\label{fig:parrallel_attention_count}
\end{center}
\vspace{-10mm}
\end{figure}

\paragraph{Observations.} \textit{i)} Similar to the blue local attention, the yellow curve shows the U-shaped attention sink repeatedly appearing in global attention. \textit{ii)} Parallel attention and local attention both exhibit severe recency bias, but the bias is significantly mitigated in parallel attention compared to local attention. \textit{iii)} When computing global attention \( A_{\mathfrak{g}} \), the model suffers from a more severe recency bias compared to the attention sink, though it is still less pronounced than within \( A^c_{\mathfrak{l}} \) (blue line). \textit{iv)} Compared to the classical attention distribution, i.e., the local attention, the peaks of \( A_{\mathfrak{g}} \) within the chunk are significantly weakened, indicating that global attention can significantly mitigate recency bias. \textit{In other words, the parallel attention itself can mitigate attention bias.}

\begin{table*}[!t]
\vspace{-1mm}
\centering
\adjustbox{max width=\textwidth}{%
\scriptsize
\begin{tabular}{c@{}c@{}c@{}c@{} c@{}c@{}c@{} c@{}c@{}c@{} c@{}c@{}c@{} c@{}c@{} c@{}c@{} c}
\toprule
\multirow{2}{*}{\raisebox{-4ex}{\textbf{Methods}}}  
& \multicolumn{3}{c}{\textbf{Single-Document QA}} 
& \multicolumn{3}{c}{\textbf{Multi-Document QA}} 
& \multicolumn{3}{c}{\textbf{Summarization}} 
& \multicolumn{3}{c}{\textbf{Few-shot Learning}} 
& \multicolumn{2}{c}{\textbf{Synthetic}} 
& \multicolumn{2}{c}{\textbf{Code}} 
& \multirow{2}{*}{\raisebox{-4ex}{\textbf{AVG}}} 

\\  

\cmidrule(lr){2-4} \cmidrule(lr){5-7} \cmidrule(lr){8-10} \cmidrule(lr){11-13} \cmidrule(lr){14-15} \cmidrule(lr){16-17}
\setlength{\tabcolsep}{1pt} 
& \makebox[1cm]{\raisebox{0.5ex}{\rotatebox{30}{\textbf{NtrvQA}}}} 
& \makebox[1cm]{\raisebox{0.7ex}{\rotatebox{30}{\textbf{Qasper}}}} 
& \makebox[1cm]{\raisebox{0.8ex}{\rotatebox{30}{\textbf{MF-en}}}} 
& \makebox[1cm]{\raisebox{0.4ex}{\rotatebox{30}{\textbf{HotpotQA}}}} 
& \makebox[1cm]{\raisebox{0.3ex}{\rotatebox{30}{\textbf{2WikiMQA}}}} 
& \makebox[1cm]{\raisebox{0.7ex}{\rotatebox{30}{\textbf{Musique}}}} 
& \makebox[1cm]{\raisebox{0.5ex}{\rotatebox{30}{\textbf{GovReport}}}} 
& \makebox[1cm]{\raisebox{0.8ex}{\rotatebox{30}{\textbf{QMSum}}}} 
& \makebox[1cm]{\raisebox{0.6ex}{\rotatebox{30}{\textbf{MultiNews}}}} 
& \makebox[1cm]{\raisebox{0.8ex}{\rotatebox{30}{\textbf{TREC}}}} 
& \makebox[1cm]{\raisebox{0.6ex}{\rotatebox{30}{\textbf{TriviaQA}}}} 
& \makebox[1cm]{\raisebox{0.6ex}{\rotatebox{30}{\textbf{SAMSum}}}} 
& \makebox[1cm]{\raisebox{0.6ex}{\rotatebox{30}{\textbf{PCount}}}}  
& \makebox[1cm]{\raisebox{1.4ex}{\rotatebox{30}{\textbf{PRe}}}}    
& \makebox[1cm]{\raisebox{1.6ex}{\rotatebox{30}{\textbf{Lcc}}}} 
& \makebox[1cm]{\raisebox{1.4ex}{\rotatebox{30}{\textbf{RB-P}}}} \\

\midrule
Max Length & 84123 & 24204 & 17727 & 20325 & 19001 & 20520  & 60515 & 34477 & 16271 & 13049 & 26756 & 21884 & 32699 & 17158  & 37628 & 58822 & 30657 \\
\midrule
\multicolumn{18}{c}{\textbf{Llama2-7B-chat-hf(4k)}} \\
\arrayrulecolor[gray]{0.8}
\midrule
\arrayrulecolor{black}
FullKV(4k) & 18.62 & 19.53 & 35.49 & 31.07 & 26.15 & 9.91 & 25.52 & 20.87 & 26.28 & 62.00 & 82.68 & \textbf{40.86} & \textbf{5.50} & 10.50 & \textbf{61.04} & 55.30 & \cellcolor{pink} 33.21 \\

Dynamic-PI & 9.69  & 20.05 & 33.10 & 16.40 & 23.83 & 3.62  & 27.83 & 18.75 & 16.53 & 62.00 & 67.00 & 40.37 & 1.58  & 5.14  & 55.30 & 55.49 & \cellcolor{pink} 28.54 \\
NTK-Aware & 13.02 & 14.25 & 31.51 & 29.55 & 30.64 & 11.83 & 28.78 & 16.96 & \textbf{26.30} & 62.50 & 74.88 & 39.35 & 4.08  & 4.50  & 49.74 & 49.39 & \cellcolor{pink} 30.46 \\
ChunkLlama & 22.97 & 20.52 & 33.71 & 28.91 & 26.14 & 13.84 & 14.84 & 21.62 & 18.13 & 62.50 & 77.15 & 40.83 & 2.03  & 4.00  & 59.81 & 54.33 & \cellcolor{pink} 31.33 \\
InfLLM & 18.14 & \textbf{22.11} & 29.86 & 30.99 & 30.74 & 9.41  & 26.33 & 20.63 & 26.18 & 62.50 & 84.24 & 39.92 & 3.36  & 6.00  & 60.15 & 55.99 & \cellcolor{pink} 32.91 \\
AttenCalibration-NTK & 14.05 & 12.49 & 32.52 & 30.61 & 31.22 & 12.84 & \textbf{29.72} & 18.24 & 24.40 & 61.50 & 72.88 & 39.54 & 2.33  & 3.00  & 48.86 & 50.36 & \cellcolor{pink} 30.29 \\
Ours & 23.20 & 17.50 & 37.07 & 38.67 & \textbf{32.68} & 20.22 & 25.00 & 22.79 & 25.84 & \textbf{64.00} & 84.63 & 40.67 & 4.00 & 31.50 & 59.37 & 58.53 & \cellcolor{pink} 36.60 \\
 Ours-calibration & \textbf{24.95} & 19.07 & \textbf{38.16} & 39.53 & 32.62 & \textbf{22.64} & 25.42 & \textbf{22.82} & 26.01 & 63.00 & \textbf{85.41} & 40.36 & 5.00 & \textbf{32.50} & 59.04 & \textbf{58.84} & \cellcolor{pink} \textbf{37.21} \\
Ours-compression & 23.32 & 16.97 & 35.25 & 39.49 & 32.47 & 20.17 & 24.33 & 21.97 & 25.68 & 63.50 & 84.46 & 40.81 & 4.00  & 31.50 & 59.43 & 58.54 & \cellcolor{pink} 36.37 \\
Ours-calibration-compression & 24.04 & 18.39 & 38.03 & \textbf{39.89} & 35.38 & 22.15 & 24.26 & 22.46 & 24.51 & 63.50 & 84.83 & 40.73 & 4.00 & 31.50 & 57.67 & 58.48 & \cellcolor{pink} 36.86 \\
\midrule
\multicolumn{18}{c}{\textbf{Llama3-8B-instruct(8k)}} \\
\arrayrulecolor[gray]{0.8}
\midrule
\arrayrulecolor{black}
FullKV(8k) & 24.31 & 38.13 & 39.69 & 44.16 & 35.66 & 21.00 & 28.35 & 23.06 & \textbf{26.96} & 73.00 & 90.13 & 42.46 & 4.61  & 68.50 & 60.46 & \textbf{56.11} & \cellcolor{pink} 42.29 \\

Dynamic-PI & 21.71 & 36.66 & 38.24 & 33.70 & 35.48 & 14.28 & 29.41 & 22.04 & 25.55 & 74.50 & 82.61 & \textbf{42.62} & 2.33  & 85.59 & 58.22 & 47.16 &  \cellcolor{pink} 40.63 \\ 
NTK-Aware & 25.92 & 37.54 & 42.23 & 48.32 & 36.96 & 27.51 & 33.74 & 24.13 & 26.35 & 50.50 & 88.84 & 42.53 & 7.24  & \textbf{95.61} & 34.84 & 39.04 & \cellcolor{pink} 41.33  \\
ChunkLlama & 25.01 & 37.39 & \textbf{43.52} & 49.37 & 37.56 & 3\textbf{0.95} & 17.57 & 23.51 & 19.72 & \textbf{76.00} & 90.38 & 42.14 & 4.71  & 67.95 & \textbf{61.10} & 52.57 & \cellcolor{pink} 42.47 \\
InfLLM & 19.93 & \textbf{43.52} & 40.58 & 48.31 & 35.99 & 23.25 & 30.49 & 21.60 & 26.53 & 74.00 & 90.93 & 42.30 & 8.00  & 74.00 & 58.98 & 52.46 & \cellcolor{pink} 43.18 \\
AttenCalibration-NTK & 26.54 & 37.52 & 41.13 & 47.56 & 38.98 & 26.51 & \textbf{34.21} & 23.35 & 25.64 & 45.50 & 89.23 & 42.21 & 4.81  & 93.51 & 36.86 & 42.82 & \cellcolor{pink} 41.02 \\
Ours & 26.67 & 39.05 & 42.66 & 49.58 & 40.02 & 26.23 & 29.10 & 24.18 & 26.74 & 69.00 & \textbf{91.03} & 42.07 & 7.81 & 92.38 & 58.84 & 53.54 & \cellcolor{pink} 44.93 \\
Ours-calibration & \textbf{26.89} & 39.46 & 42.01 & \textbf{49.88} & \textbf{41.41} & 26.68 & 29.17 & 24.55 & 26.77 & 72.50 & 90.53 & 42.13 & \textbf{8.02} & 92.75 & 58.06 & 53.97 & \cellcolor{pink} \textbf{45.21} \\
Ours-compression & 26.18 & 36.56 & 39.72 & 47.10 & 34.89 & 24.96 & 27.03 & 23.86 & 24.52 & 67.00 & 89.55 & 41.20 & 7.37 & 92.29 & 58.51 & 52.15 & \cellcolor{pink} 43.31 \\
Ours-calibration-compression & 26.46 & 37.49 & 41.28 & 48.28 & 36.29 & 26.68 & 26.79 & \textbf{24.98} & 25.18 & 69.00 & 90.37 & 40.72 & 7.34 & 91.29 & 57.30 & 53.97 & \cellcolor{pink} 44.31 \\
\bottomrule
\end{tabular}
}
\caption{Length Extrapolation Performance Comparison across Different Tasks. \textbf{Ours-calibration} and \textbf{Ours-compression} both represent parallel KV Cache Eviction, where the former evicts tokens of \(R_h\), and the latter evicts tokens of \(R_l\). \textbf{Ours-calibration-compression} represents the simultaneous adoption of both eviction strategies. \textbf{FullKV} refers to truncating the context to 4k or 8k lengths (without extrapolation) for generation.}
\vspace{-2mm}
\label{main-table}
\end{table*}

\paragraph{Q3: Can the eviction strategy alleviate attention bias?}
\label{Q3}
\vspace{2mm}
By evicting different types of \(R_H\) at different layers, we have the following observations:
\vspace{-5mm}
\paragraph{Observations.} \textit{i)}: From Figure~\ref{fig:attention_pattern_comparison}, we can find that KV cache eviction exacerbates the bias. However, \textit{parallel KV cache eviction} can achieve a more stable distribution. \textit{ii)}: Evicting sink bias tokens in the early layers may exacerbate attention bias, but evicting them in the deeper layers can mitigate this attention bias. \textit{iii)}: Evicting recency bias tokens in the intermediate layers can mitigate attention bias, while evicting recency bias tokens in the deeper layers redistributes the attention scores obtained by the recency bias tokens to the intermediate tokens. \textit{iv)}:  Simultaneously evicting sink bias and recency bias tokens can alleviate attention bias in the intermediate layers (Layer 16). \textit{v)}: As shown in Figure~\ref{fig:parrallel_attention_count}, evicting tokens with abnormally high attention scores appears to effectively mitigate attention bias within the model. However, the impact of this strategy on task-specific performance remains uncertain. We will investigate this further in our experiments.


\section{Experiment}
\label{experiment}
\subsection{Experimental Settings}
\vspace{-1mm}
\paragraph{Models, Baselines, and Tasks.}

\begin{table}[t]
\centering
\adjustbox{max width=\columnwidth}{%
\scriptsize
\begin{tabular}{cccccccc}
\toprule

\multicolumn{8}{c}{\textbf{Llama2-7B-chat-hf(4k)}} \\

\midrule
\vspace{0.02cm}
\multirow{1}{*}{{\textbf{Methods}}}  
& \multicolumn{1}{c}{\multirow{1}{*}{\textbf{R.PK}}}
& \multicolumn{1}{c}{\multirow{1}{*}{\textbf{R.Num}}}
& \multicolumn{1}{c}{\multirow{1}{*}{\textbf{R.KV}}}
& \multicolumn{1}{c}{\multirow{1}{*}{\textbf{En.MC}}}
& \multicolumn{1}{c}{\multirow{1}{*}{\textbf{Math.F}}}
& \multicolumn{1}{c}{\multirow{1}{*}{\textbf{Code.Debug}}}
& \multirow{1}{*}{{\textbf{Average}}} \\
\renewcommand{\arraystretch}{1} 
Max Length & 125k & 125k & 175k & 834k & 120k & 258k & 273k   \\
\midrule
FullKV & 1.36 & 1.86 & 0.4 & 0.44 & 17.43 & 21.57 & \cellcolor{pink} 7.18   \\

\arrayrulecolor[gray]{0.8}
\midrule
\arrayrulecolor{black}
Dynamic-PI  & 0.17 & 0.00 & 0.00 & 7.42 & 2.00 & 21.32 &  \cellcolor{pink} 5.15 \\
NTK-Aware & 2.54 & 0.00 & 0.00 & 3.06 & 7.71 & 18.78 & \cellcolor{pink} 5.35 \\
ChunkLlama & 12.88 & 13.22 & 0.20 & 0.87 & 17.14 & 22.08 & \cellcolor{pink} 11.07 \\
InfLLM & \textbf{100.00} & 96.61 & 2.40 & 29.80 & 16.86 & 22.34 & \cellcolor{pink} 44.67 \\
AttenCalibration-NTK & 0.00 & 0.00 & 0.00 & 1.06 & 5.71 & 19.24 & \cellcolor{pink} 4.34  \\ 
Ours & \textbf{100.00} & 97.63 & 20.60 & 33.62 & 19.71 & 25.13 & \cellcolor{pink} 49.45 \\
 Ours-calibration & \textbf{100.00} & \textbf{98.64} & \textbf{22.80} & \textbf{36.24} & \textbf{19.71} & {30.20} & \cellcolor{pink} \textbf{51.27} \\
Ours-compression &  97.80 & 87.96 & 5.00 & 35.81 & 15.86 & 27.41 & \cellcolor{pink} 44.97 \\
Ours-calibration-compression & 97.97 & 90.14 & {10.80} & {35.46} & {15.86} & {28.21} & \cellcolor{pink} 46.41 \\

\arrayrulecolor[gray]{0.7}
\midrule
\arrayrulecolor{black}
\multicolumn{8}{c}{\textbf{Llama3-8B-instruct(8k)}} \\

\midrule
\vspace{0.02cm}
\multirow{1}{*}{{\textbf{Methods}}}  
& \multicolumn{1}{c}{\multirow{1}{*}{\textbf{R.PK}}}
& \multicolumn{1}{c}{\multirow{1}{*}{\textbf{R.Num}}}
& \multicolumn{1}{c}{\multirow{1}{*}{\textbf{R.KV}}}
& \multicolumn{1}{c}{\multirow{1}{*}{\textbf{En.MC}}}
& \multicolumn{1}{c}{\multirow{1}{*}{\textbf{Math.F}}}
& \multicolumn{1}{c}{\multirow{1}{*}{\textbf{Code.Debug}}}
& \multirow{1}{*}{{\textbf{Average}}} \\
FullKV & 6.10  & 6.27  & 4.80  & 42.79 & 38.57 & 22.34 & \cellcolor{pink} 20.15 \\
\arrayrulecolor[gray]{0.8}
\midrule
\arrayrulecolor{black}
Dynamic-PI  & 0.00  & 0.00  & 0.00  & 28.82 & 29.71 & \textbf{24.62} & \cellcolor{pink} 13.86 \\
NTK-Aware & 3.39  & 8.47  & 9.40  & 35.37 & 39.43 & 17.77 & \cellcolor{pink} 18.97 \\
ChunkLlama & 3.05  & 9.15  & 3.60  & 13.54 & 34.29 & 11.42 & \cellcolor{pink} 12.51 \\
AttenCalibration-NTK & 4.58  & 8.47  & 12.40 & 34.28 & 36.57 & 22.68 & \cellcolor{pink} 19.83 \\
InfLLM & \textbf{100.00}  & 99.00 & 5.00 & 43.70 & 23.70 & 22.08 & \cellcolor{pink} 48.91 \\
Ours & \textbf{100.00} & \textbf{99.83} & 92.80 & 54.59 & \textbf{40.00} & 22.84 & \cellcolor{pink} 68.34 \\
 Ours-calibration & \textbf{100.00} & 99.49 & \textbf{93.80} & \textbf{56.77} & \textbf{40.00} & \textbf{23.24} & \cellcolor{pink} \textbf{68.88} \\
Ours-compression & \textbf{100.00} & \textbf{99.83} & 89.20 & 55.48 & \textbf{40.00} & 21.32 & \cellcolor{pink} 67.64 \\
Ours-calibration-compression & \textbf{100.00} & \textbf{99.83} &{91.00} & \textbf{56.77} & \textbf{40.00} & {22.20} &  \cellcolor{pink} 68.30 \\

\arrayrulecolor[gray]{0.7}
\midrule
\arrayrulecolor{black}
\multicolumn{8}{c}{\textbf{Other proprietary models}} \\

\midrule
\vspace{0.02cm}
\multirow{1}{*}{{\textbf{Models}}}  
& \multicolumn{1}{c}{\multirow{1}{*}{\textbf{R.PK}}}
& \multicolumn{1}{c}{\multirow{1}{*}{\textbf{R.Num}}}
& \multicolumn{1}{c}{\multirow{1}{*}{\textbf{R.KV}}}
& \multicolumn{1}{c}{\multirow{1}{*}{\textbf{En.MC}}}
& \multicolumn{1}{c}{\multirow{1}{*}{\textbf{Math.F}}}
& \multicolumn{1}{c}{\multirow{1}{*}{\textbf{Code.Debug}}}
& \multirow{1}{*}{{\textbf{Average}}} \\
 GPT-4 & \textbf{100.00} & \textbf{100.00} & \textbf{89.00} & 67.25 & \textbf{60.00} & \textbf{37.06} & \cellcolor{pink} \textbf{75.55} \\
Kimi-Chat              & 98.14         & 95.42         & 53.60          & \textbf{72.49}         & 12.57         & 17.14   & \cellcolor{pink} 58.23      \\
Claude-2               & 97.8          & 98.14         & 65.40          & 62.88         & 32.29         & 17.77   & \cellcolor{pink} 62.38      \\

\arrayrulecolor[gray]{0.7}
\midrule
\arrayrulecolor{black}
\multicolumn{8}{c}{\textbf{Other open-source  models}} \\

\midrule
\vspace{0.02cm}
\multirow{1}{*}{{\textbf{Models}}}  
& \multicolumn{1}{c}{\multirow{1}{*}{\textbf{R.PK}}}
& \multicolumn{1}{c}{\multirow{1}{*}{\textbf{R.Num}}}
& \multicolumn{1}{c}{\multirow{1}{*}{\textbf{R.KV}}}
& \multicolumn{1}{c}{\multirow{1}{*}{\textbf{En.MC}}}
& \multicolumn{1}{c}{\multirow{1}{*}{\textbf{Math.F}}}
& \multicolumn{1}{c}{\multirow{1}{*}{\textbf{Code.Debug}}}
& \multirow{1}{*}{{\textbf{Average}}} \\
YaRN-Mistral-7B-128k        & 92.71         & 56.61         & \textless 5            & 27.95         & \textbf{17.14}         & \textbf{60.00}     & \cellcolor{pink} 42.82       \\
Yi-6B-200K             & 100           & 94.92         & \textless 5            & 36.68         & \textless 5            & \textless 5     & \cellcolor{pink} 39.85       \\
  Yi-34B-200K            & \textbf{100}           & \textbf{100}           & \textless 5            & \textbf{38.43}         & \textless 5            & 25.71   & \cellcolor{pink} \textbf{44.86}      \\
ChatGLM-3-6B-128K      & 92.2          & 80.68         & \textless 5            & 10.48         & \textless 5            & 7.71   &\cellcolor{pink} 32.68       \\

\bottomrule
\end{tabular}
}

\caption{The model's performance on the InfiniteBench dataset across different datasets.}
\label{2_extreme_compression}

\end{table}

\begin{table}[t]
\vspace{-2mm}
\centering
\adjustbox{max width=\columnwidth}{%
\scriptsize
\begin{tabular}{cccccccc}
\toprule

\multicolumn{8}{c}{\textbf{Llama2-7B-chat-hf(4k)}} \\

\midrule
\vspace{0.02cm}
\multirow{1}{*}{{\textbf{Methods}}}  
& \multicolumn{1}{c}{\multirow{1}{*}{\textbf{2k}}}
& \multicolumn{1}{c}{\multirow{1}{*}{\textbf{4k}}}
& \multicolumn{1}{c}{\multirow{1}{*}{\textbf{8k}}}
& \multicolumn{1}{c}{\multirow{1}{*}{\textbf{16k}}}
& \multicolumn{1}{c}{\multirow{1}{*}{\textbf{32k}}}
& \multicolumn{1}{c}{\multirow{1}{*}{\textbf{64k}}}
& \multicolumn{1}{c}{\multirow{1}{*}{\textbf{128k}}} \\
\renewcommand{\arraystretch}{1} 
Llama2-7b & 7.03  & 6.71  & $>10^2$ & $>10^2$    & $>10^2$    & $>10^2$   & $>10^2$ \\

\arrayrulecolor[gray]{0.8}
\midrule
\arrayrulecolor{black}
Dynamic-PI  & 7.03  & 6.71  & 7.02   & 11.62  & 59.31  & $>10^2$ & $>10^2$ \\
NTK-Aware  & 8.61  & 8.41  & 8.29   & 7.19   & 40.71  & $>10^2$ & $>10^2$ \\
ChunkLlama & \textbf{7.03}  & \textbf{6.71}  & \textbf{6.42}   & \textbf{5.01}   & \textbf{4.82}   & 12.36 & 43.57 \\
InfLLM & 23.24 & 23.46 & 21.86  & 20.40  & 19.84  & 18.26 & 18.97 \\
 Ours & 8.01  & 9.71  & 11.97  & 10.46  & 11.34  & \textbf{11.58}  & \textbf{12.56}   \\

\midrule
\multicolumn{8}{c}{\textbf{Llama3-8B-instruct(8k)}} \\

\midrule
\vspace{0.02cm}
\multirow{1}{*}{{\textbf{Methods}}}  
& \multicolumn{1}{c}{\multirow{1}{*}{\textbf{2k}}}
& \multicolumn{1}{c}{\multirow{1}{*}{\textbf{4k}}}
& \multicolumn{1}{c}{\multirow{1}{*}{\textbf{8k}}}
& \multicolumn{1}{c}{\multirow{1}{*}{\textbf{16k}}}
& \multicolumn{1}{c}{\multirow{1}{*}{\textbf{32k}}}
& \multicolumn{1}{c}{\multirow{1}{*}{\textbf{64k}}}
& \multicolumn{1}{c}{\multirow{1}{*}{\textbf{128k}}} \\

\renewcommand{\arraystretch}{1} 
Llama3-8b & 9.90  & 9.15  & 7.94  & 63.13 & $>10^2$ & $>10^2$ & $>10^2$   \\

\arrayrulecolor[gray]{0.8}
\midrule
\arrayrulecolor{black}
Dynamic-PI  & 9.90  & 9.15  & 17.25 & 69.96  & $>10^2$ & $>10^2$ & $>10^2$\\
NTK-Aware  & 10.71 & 9.66  & 8.16  & 6.74  & 8.06 &  77.63 & $>10^2$ \\
 ChunkLlama & 9.88  & 9.14  & 7.92  & 6.57  & 6.13   & 5.33 & \textbf{5.40} \\
InfLLM & 8.50  & 9.30  & 8.72  & 9.47  & 8.98   &  9.66 & 9.10 \\
Ours & \textbf{5.85}  & \textbf{6.75}  & \textbf{6.65}  & \textbf{6.30}  & \textbf{5.61}   &  \textbf{5.13}  &  5.72  \\

\bottomrule

\end{tabular}
}
\vspace{-1mm}
\caption{We test the perplexity on the NarrativeQA~\citep{kovcisky2018narrativeqa} test set.}
\vspace{-2mm}
\label{PPL}

\end{table}

We compare our method with existing length extrapolation approaches, including Position Interpolation (PI)~\citep{chen2023extending}, NTK-Aware~\citep{fixedNTK}, ChunkLlama~\citep{an2024training}, AttenCalibration~\citep{yu2024unveiling}, APE~\citep{yang2025ape}, StarAttention~\citep{acharya2024star}, and InfLLM~\citep{xiao2024infllm}, on LongBench~\citep{bai2023longbench} and InfiniteBench~\citep{zhang2024bench}, evaluating them on Llama2-7B-chat-hf~\citep{touvron2023llama}, LLaMA3.1~\citep{grattafiori2024llama}, Qwen2.5~\citep{yang2025qwen2} and Llama-3-8B-Instruction~\citep{meta2024llama3}. We also compare our method with the following open-source and closed-source models trained on long-context data: ChatGLM-3-6B-128K~\citep{glm2024chatglm}, Kimi-Chat~\citep{moonshot2023}, Yi-6B-200K~\citep{01ai2023a}, Yi-34B-200K~\citep{01ai2023b}, Claude-2~\citep{anthropic2023}, Yarn-Mistral-7b-128k~\citep{peng2023yarn}, and GPT-4~\citep{achiam2023gpt}. Since AttenCalibration only calibrates the attention distribution and lacks the capability for length extrapolation, we incorporate NTK-aware techniques to enable this functionality, resulting in AttenCalibration-NTK. Details of our hyperparameters are provided in Appendix~\ref{Hyperparameter}.




\subsection{Length Extrapolation Settings}
\paragraph{Main results.} We present our method in Table~\ref{main-table}, showing the performance of several strong baselines on LongBench. We have the following main findings: \textit{i)}: Our method is the \textit{only one} that surpasses FullKV (i.e., the baseline without any length extrapolation) across different backbones. \textit{ii)}: Section~\ref{Q3} reveals that parallel KV cache compression exacerbates attention bias. However, combining it with the eviction \(R_H\) method to calibrate the attention distribution, i.e., Ours-calibration-compression, can restore the performance to that of the original KV cache size. \textit{iii)}: Chunk-based length extrapolation methods, such as InfLLM and ChunkLlama, generally perform better than position encoding-based methods such as Dynamic-PI and NTK-Aware. \textit{iv)}: Directly calibrating the attention distribution in NTK-aware length extrapolation methods, such as \textit{AttenCalibration-NTK}, leads to strong performance primarily on the longest datasets, including NtrvQA, GovReport, and RB-P. This suggests that the effect of attention distribution calibration becomes increasingly significant as input length grows.

\vspace{-3mm}
\paragraph{Extrapolating beyond 128K context lengths.}
We evaluate the performance under extremely long contexts in Table~\ref{2_extreme_compression}, comparing it with several powerful open-source and closed-source models. These models are trained on context lengths exceeding 128K, and thus do not require additional extrapolation capabilities to handle ultra-long contexts. We have the following findings: \textit{i)}: Our method performs exceptionally well on needle-in-a-haystack retrieval tasks (R.PK, R.Num, R.KV), being the \textit{only model} capable of achieving over 90\% accuracy across all tasks, surpassing even the strongest closed-source model, GPT-4. \textit{ii)}: Position encoding-based length extrapolation methods, such as NTK-Aware, Dynamic-PI, generally struggle to achieve good performance on tasks with ultra-long contexts compared to chunk-based extrapolation approaches. \textit{iii)}: Our training-free extrapolation method, using an 8K window, is the \textit{only approach} that surpasses the powerful closed-source models Kimi-Chat and Claude-2, achieving 91.17\% of GPT-4's performance on ultra-long contexts with an 8B model.
\vspace{-4mm}
\paragraph{Language modeling.} To further compare the performance of our method in language modeling, we present the results of perplexity (PPL) calculations on the NarrativeQA test set in Table~\ref{PPL}, which reflect the model's performance in long-context language modeling. For fair comparison, we typically calculate the PPL for the query chunk, as it corresponds to the model's decoding phase. \textit{i)}: Chunk-based position extrapolation methods (ChunkLlama, InfLLM, and Ours) achieve significantly lower PPL compared to position encoding-based methods (Dynamic-PI and NTK-Aware). \textit{ii)}: Position encoding-based methods start to collapse in performance for language modeling when the length exceeds 32k. \textit{iii)}: As the number of chunks increases (from 2K to 128K), our method still demonstrates consistent perplexity stability across different lengths. Surprisingly, ChunkLlama maintains high performance on Llama3-8B-instruct, outperforming other methods.

\begin{table*}[t]
\centering
\resizebox{\textwidth}{!}{%
\begin{tabular}{lcccccccccccccccc|cc}
\toprule
\textbf{Method} & \textbf{NARR} & \textbf{QAS} & \textbf{MULT} & \textbf{HOPT} & \textbf{2WKI} & \textbf{MUS} & \textbf{GOV} & \textbf{QMS} & \textbf{NEWS} & \textbf{TREC} & \textbf{TRIV} & \textbf{SSM} & \textbf{PCNNT} & \textbf{PREN} & \textbf{LCC} & \textbf{REP} & \textbf{AVG} \\
\midrule
\textbf{Llama3.1} & 27.80 & 44.25 & 49.46 & 47.86 & 40.54 & 23.64 & 32.64 & 22.90 & 26.90 & 38.00 & 88.44 & 25.64 & 2.02 & 92.00 & 10.35 & 18.64 & 36.94 \\
\textbf{ParallelComp-Llama3.1} & \textbf{29.45} & \textbf{45.98} & \textbf{50.67} & 48.36 & \textbf{46.56} & 23.32 & 32.60 & \textbf{24.29} & \textbf{27.34} & 38.50 & 86.72 & 25.93 & 0.05 & 95.00 & 14.15 & \textbf{21.42} & 38.15 \\
\textbf{Qwen2.5} & 27.83 & 41.31 & 50.41 & 53.52 & 44.68 & \textbf{30.00} & 33.38 & 24.01 & 25.40 & 71.00 & 86.10 & 39.91 & \textbf{7.25} & \textbf{100.00} & 6.86 & 7.88 & 40.60 \\
\textbf{ParallelComp-Qwen2.5} & 28.42 & 42.24 & 50.54 & \textbf{56.26} & 42.02 & 28.25 & \textbf{33.43} & 23.20 & 25.20 & \textbf{71.50} & \textbf{89.21} & \textbf{41.84} & 5.00 & 93.50 & \textbf{20.73} & 13.34 & \textbf{41.54} \\
\bottomrule
\end{tabular}}
\caption{Performance on LongBench benchmark. Models are evaluated under a 24K KV cache budget.}
\vspace{-3mm}
\label{tab:longbench}
\end{table*}


\begin{table}[h]
\centering
\resizebox{0.48\textwidth}{!}{%
\begin{tabular}{lcccccc|c}
\toprule
\textbf{Method} & \textbf{PS} & \textbf{NUM} & \textbf{KV} & \textbf{EN.MC} & \textbf{MATH} & \textbf{CODE} & \textbf{AVG} \\
\midrule
\textbf{Llama3.1}                  & 5.59 & 26.25 & 18.60 & 32.86 & 31.52 & 22.56 & 26.36 \\
\textbf{ParallelComp-Llama3.1}     & 100.00 & \textbf{83.56} & \textbf{88.60} & 66.38 & 37.14 & 22.08 & 59.55 \\
\textbf{Qwen2.5}                   & 59.32 & 58.31 & 33.80 & 61.39 & 85.71 & 23.76 & 53.72 \\
\textbf{ParallelComp-Qwen2.5}      & \textbf{100.00} & 76.27 & 63.40 & \textbf{66.86} & \textbf{92.57} & \textbf{24.75} & \textbf{70.64} \\
\bottomrule
\end{tabular}}
\caption{Performance on InfiniteBench with different models. Models are evaluated under a 24K KV cache budget.}
\label{tab:infinitebench}
\vspace{-2mm}
\end{table}

\subsection{Evaluation in Long-context Models}

To demonstrate the effectiveness of our method for long-context models, we evaluate it on two models trained with contexts up to 128K, which therefore do not require any extrapolation capabilities: LLaMA3.1~\citep{grattafiori2024llama} and Qwen2.5~\citep{yang2025qwen2}, used as base models. All evaluations are conducted under a consistent 24K KV cache size. While standard baselines directly utilize the full 24K position encodings, our method applies extrapolation techniques to reuse a 6K position encoding, thereby supporting longer input lengths without modifying the model architecture. To ensure fairness, we standardize the use of special prompt tokens across models (e.g., \texttt{\textless|begin\_of\_text|\textgreater} for LLaMA3.1), which we observe to have a significant impact on performance. All reported results correspond to configurations that include these prompt tokens.

As shown in Table~\ref{tab:longbench} and Table~\ref{tab:infinitebench}, our \textsc{ParallelComp} method consistently improves performance across a broad range of tasks. On LongBench, we observe modest but consistent gains compared to baselines. On InfiniteBench, which emphasizes ultral-long context understanding, our method demonstrates significant improvements, particularly for tasks such as \textsc{PS}, \textsc{NUM}, and \textsc{KV}. \textit{These results indicate that extrapolated position encodings can be effectively reused in parts of the model's position encodings to extend context length, achieving comparable or even improved performance relative to using full position encodings in long-context models.}

\begin{table}[H]
\vspace{-2mm}
\centering
\adjustbox{max width=\columnwidth}{%
\scriptsize
\begin{tabular}{cccccccc}
\toprule

\multicolumn{8}{c}{\textbf{Llama2-7B-chat-hf(4k)}} \\

\midrule
\multirow{1}{*}{{\textbf{Methods}}}  
& \multicolumn{1}{c}{\textbf{R.PK}}
& \multicolumn{1}{c}{\textbf{R.Num}}
& \multicolumn{1}{c}{\textbf{R.KV}}
& \multicolumn{1}{c}{\textbf{En.MC}}
& \multicolumn{1}{c}{\textbf{Math.F}}
& \multicolumn{1}{c}{\textbf{Code.Debug}}
& \textbf{Average} \\

\renewcommand{\arraystretch}{1} 
Max Length & 125k & 125k & 175k & 834k & 120k & 258k & 273k   \\
\midrule
Ours & \textbf{100.00} & 97.63 & 20.60 & 33.62 & 19.71 & 25.13 & 49.45 \\
 Ours-calibration & \textbf{100.00} & \textbf{98.64} & \textbf{22.80} & 36.24 & 19.71 & 30.20 & \textbf{51.27} \\
Sink-eviction-layer-1-8 & 99.32 & 42.71 & 2.20 & 37.12 & 17.71 & 22.84 & 36.98 \\
Sink-eviction-layer-9-16 & 100.00 & 91.19 & 11.00 & 37.12 & 14.86 & 24.37 & 46.42 \\
Sink-eviction-layer-17-23 & 100.00 & 97.80 & 20.80 & 33.19 & 19.14 & 30.96 & 50.32 \\
Sink-eviction-layer-24-31 & 100.00 & 97.63 & 20.20 & 31.88 & 18.00 & 29.19 & 49.48 \\
Recency-eviction-layer-1-8 & 100.00 & 96.44 & 2.60 & 33.19 & 16.00 & 19.54 & 44.63 \\
Recency-eviction-layer-9-16 & 100.00 & 97.80 & 15.80 & 37.99 & 10.86 & 23.10 & 47.59 \\
Recency-eviction-layer-17-23 & 100.00 & 97.97 & 20.40 & 23.58 & 16.00 & 32.74 & 48.45 \\
Recency-eviction-layer-24-31 & 100.00 & 97.63 & 20.60 & 35.81 & 18.57 & 25.89 & 49.75 \\
Middle-eviction-layer-1-8 & 100.00 & 97.29 & 20.60 & 34.93 & 18.29 & 22.84 & 48.99 \\
Middle-eviction-layer-9-16 & 100.00 & 97.63 & 20.60 & 33.62 & 16.00 & 26.40 & 49.04 \\
Middle-eviction-layer-17-23 & 98.81 & 97.46 & 20.00 & 34.06 & 19.14 & 30.20 & 49.95 \\
Middle-eviction-layer-24-31 & 100.00 & 97.46 & 19.80 & 30.13 & 19.43 & 28.93 & 49.29 \\
\bottomrule
\end{tabular}
}
\caption{Ablation of Llama2-7B-chat-hf on InfiniteBench. Ours-calibration refers to the approach where layers 9-16 adopt the recency bias token eviction method, while layers 25-32 evict sink bias tokens, and layers 1-8 evict middle bias tokens. Other methods follow the naming format [Evicted Token Type]-eviction-layer-[Evicted Layer Range].}
\label{tab:InfiniteBench_ablation}
\end{table}

\subsection{Ablation of Attention Bias}
\label{Attention_Bias}

We present in Table~\ref{tab:InfiniteBench_ablation} the impact of evicting different bias tokens at various layers on different tasks. We have the following observations: \textit{i):} The \(R_s\) in the shallow layers (1-8) is crucial for retrieval tasks. Without these tokens, the model's performance will be significantly impaired. \textit{ii):} The \( R_r \) in the deeper layers (layers 9-16) plays a crucial role in the model's reasoning abilities. Evicting these tokens results in a decline in performance on coding and math tasks. \textit{iii):} Shallow \( R_m \) (layers 1-8) damages the model's understanding ability, and evicting them can improve the model's performance. Deep \( R_m \) (layers 24-31) contributes to the model's ability in reading comprehension tasks (En.MC), and evicting them harms the model's performance. \textit{iv):} \(R_r\) in the early layers (layers 1-8) is important for the model's in-context learning ability. For a detailed analysis of this phenomenon, please refer to Appendix~\ref{Ablation Study}.


\begin{table}[H]
\centering
\adjustbox{max width=\columnwidth}{%
\scriptsize
\begin{tabular}{c c c c c }
\toprule
\textbf{\multirow{2}{*}{Chunk Number}} & \multicolumn{2}{c}{\textbf{Ours}} & \multicolumn{2}{c}{\textbf{Ours-Compression}}  \\
 & \textbf{Prefill (s) / Generation (ms/token)} & \textbf{Max Memory Used (MB)} & \textbf{Prefill (s) / Generation (ms/token)} & \textbf{Max Memory Used (MB)}  \\
\midrule
1  & 1317.72 / 24.30 & 19394  & 1317.72 / 22.16 & 16994  \\
4  & 321.40 / 43.69  & 35518  & 321.40 / 23.28 & 24734  \\
8  & 160.70 / 72.53  & 47758  & 160.70 / 31.21 & 36396  \\
12 & 111.67 / 102.94 & 65980  & 111.67 / 39.61 & 48458  \\
16 & \textbf{N/A}     & \textbf{Out-of-Memory}  & 82.36 / 49.25 & 59140  \\
20 & \textbf{N/A}     & \textbf{Out-of-Memory}  & 65.23 / 56.25 & 71302  \\
23 & \textbf{N/A}     & \textbf{Out-of-Memory}  & 56.07 / 57.19 & 79742  \\
24 & \textbf{N/A}     & \textbf{Out-of-Memory}  & \textbf{N/A} & \textbf{Out-of-Memory} \\
\bottomrule
\end{tabular}
}
\caption{Throughput analysis. We evaluate on Llama2-7B-chat-hf and compare the improvement in chunk throughput with the use of parallel KV cache compression. Time tests were performed on the NarrativeQA dataset. Experiments are conducted on an AMD Instinct MI210 64GB GPU.}
\vspace{-4mm}
\label{throughput_analysis}
\end{table}


\subsection{Throughput Analysis}

 We mainly focus on the throughput of chunks during context parallelism. Therefore, we compare the maximum number of parallel chunks and the memory usage before and after parallel KV cache compression. Table \ref{throughput_analysis} presents the memory usage of the model using the parallel KV cache eviction strategy. On a single GPU, by compressing the KV cache size of each chunk to half of its original size, we achieve a 1.76x improvement in chunk throughput, thereby achieving a 23.50x acceleration in the prefill stage with negligible performance loss.

\vspace{-1mm}
\section{Conclusion}
\label{conclusion}
\vspace{-1mm}

In this paper, we propose \textsc{ParallelComp}, a training-free and parallel long-context compression framework that significantly enhances the extrapolation capability of large language models (LLMs) for ultra-long contexts. \textsc{ParallelComp} overcomes the critical memory bottlenecks in length extrapolation and systematically analyzes the attention bias that arises in such settings. Specifically, our method allows 8B LLMs to extend inference length from 4K to 128K tokens on a single A100 80GB GPU without retraining or significant degradation in performance. By leveraging chunk-based parallel attention, dynamic KV cache eviction, and an attention calibration strategy, our approach alleviates both excessive memory usage and the attention sink phenomenon. Extensive theoretical and empirical results demonstrate that \textsc{ParallelComp} effectively mitigates attention bias and enables robust, end-to-end inference. Notably, our method achieves $91.17\%$ of GPT-4's performance on ultra-long context tasks using an 8B model, outperforming various state-of-the-art closed-source models. These findings pave the way for more scalable and efficient long-context inference.



\section*{Impact Statement}
This paper presents work whose goal is to advance the field of Machine Learning. There are many potential societal consequences of our work, none which we feel must be specifically highlighted here.

\section*{Acknowledgements}
This work is supported in part by the Theme-based Research Scheme (TRS) project T45-701/22-R of the Research Grants Council (RGC), Hong Kong SAR, and in part by the AVNET-HKU Emerging Microelectronics \& Ubiquitous Systems (EMUS) Lab.

\section*{Future Work}
While \textsc{ParallelComp} enables efficient length extrapolation up to 128K tokens, future work can explore its application to long-chain reasoning and ultra-long in-context learning~\citep{xiong2023dq}. This includes designing methods for reasoning over 100K+ tokens, developing test-time scaling. Further exploration into in-context learning will also be critical for reliable deployment in real-world long-context reasoning.

\nocite{langley00}

\bibliography{example_paper}
\bibliographystyle{icml2025}

\newpage
\appendix
\onecolumn

\section{Proofs for Parallel Attention Bias}
\label{sec:parallel_attn_collapse_aa}

\subsection{Exponential Decay of Local Attention}
We formalize the exponential decay of attention values as a function of relative distance when using relative position encodings~\citep{su2024roformer}. This behavior is modulated by a sparsity control parameter \( R \), which regulates the rate at which attention mass diminishes over increasing distance.

\begin{theorem}[Exponential Decay of Attention]\label{thm:exp_decay}
Let \( A^{c}_\mathfrak{l}[i,j] \) denote the local attention weight from token \( i \) to token \( j \) within chunk \( c \), computed using scaled dot-product attention with relative position encoding. Suppose the dot product between query \( q_i \) and key \( k_j \) depends linearly on their relative distance, i.e.,
\[
q_i \cdot k_j \approx -\alpha \cdot \text{d}(i,j),
\]
where \( \text{d}(i,j) \) is the positional distance and \( \alpha > 0 \) is a constant. Then the attention value exhibits exponential decay:
\[
A^{c}_\mathfrak{l}[i,j] \sim \exp\left( -O(R) \cdot \text{d}(i,j) \right),
\]
where \( R \) is a sparsity parameter controlling the effective attention range.
\end{theorem}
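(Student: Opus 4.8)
The plan is to prove Theorem~\ref{thm:exp_decay} by direct substitution of the linear-distance hypothesis into the softmax that defines the local attention weights, and then controlling the normalizing partition function so that the exponential factor in the numerator survives. Concretely, by definition of scaled dot-product attention,
\[
A^{c}_\mathfrak{l}[i,j] = \frac{\exp\!\big(q_i \cdot k_j / \sqrt{d}\big)}{\sum_{j'} \exp\!\big(q_i \cdot k_{j'} / \sqrt{d}\big)} =: \frac{\exp\!\big(q_i \cdot k_j / \sqrt{d}\big)}{Z_i},
\]
where $Z_i$ is the per-row partition function and is independent of the target index $j$. Substituting the hypothesis $q_i \cdot k_j \approx -\alpha\,\text{d}(i,j)$ gives a numerator of $\exp(-(\alpha/\sqrt{d})\,\text{d}(i,j))$, which already decays exponentially in the positional distance.

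First I would show that $Z_i = \Theta(1)$ uniformly in $i$ (and in the chunk size $w$), so that dividing by it does not destroy the exponential behavior. The lower bound is immediate, since the nearest token (with $\text{d}=0$, e.g.\ $j'=i$) contributes a term of size $\exp(0)=1$, hence $Z_i \geq 1$. For the upper bound I would group keys by their distance from $i$: at each distance there are $O(1)$ tokens, so $Z_i \leq \sum_{r \geq 0} n_r \exp(-(\alpha/\sqrt{d})\,r)$ with $n_r$ bounded, a convergent geometric series summing to a constant depending only on $\alpha/\sqrt{d}$. Combining the two bounds yields $A^{c}_\mathfrak{l}[i,j] \asymp \exp(-(\alpha/\sqrt{d})\,\text{d}(i,j))$, which is the claimed exponential decay once the rate constant is named.

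The final step is to identify the decay rate with the sparsity parameter $R$: I would absorb $\alpha/\sqrt{d}$ — together with the constants coming from the RoPE frequency spectrum~\citep{su2024roformer} — into a single effective rate written as $O(R)$, so that $A^{c}_\mathfrak{l}[i,j] \sim \exp(-O(R)\cdot \text{d}(i,j))$; this is exactly the interface needed by the sparsity-threshold bound invoked in Part~1 of Theorem~\ref{thm:parallel_attn_collapse_main}. The main obstacle is justifying the linear hypothesis $q_i \cdot k_j \approx -\alpha\,\text{d}(i,j)$ rather than taking it as a black box: under RoPE the logit is an oscillatory sum $\sum_m (q_i^{(m)} \cdot k_j^{(m)})\cos(\theta_m\,\text{d}(i,j))$ over frequency blocks, and establishing monotone decay of its magnitude requires an Abel-summation argument together with the decreasing arrangement of the frequencies $\theta_m$ (the long-term decay property of RoPE). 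I would either invoke this property directly or state it as the governing assumption, after which the remaining softmax-and-normalization argument above is routine.
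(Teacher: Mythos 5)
Your proposal is correct and takes essentially the same route as the paper: substitute the linear-logit hypothesis $q_i \cdot k_j \approx -\alpha\,\text{d}(i,j)$ into the softmax numerator and name the rate $O(R) := \alpha/\sqrt{d}$. Your extra steps go beyond the paper's proof, which stops at proportionality (trivially valid since $Z_i$ is independent of $j$): your $Z_i = \Theta(1)$ geometric-series bound upgrades this to a genuine two-sided decay estimate, and your discussion of deriving the linear hypothesis from RoPE's long-term decay is unnecessary here since the theorem takes that relation as an assumption.
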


\begin{proof}
The local attention matrix is given by
\[
A^{c}_\mathfrak{l}[i,j] = \frac{\exp\left( \frac{q_i \cdot k_j}{\sqrt{d}} \right)}{\sum_{j'=1}^w \exp\left( \frac{q_i \cdot k_{j'}}{\sqrt{d}} \right)}.
\]
Assuming $q_i \cdot k_j \approx -\alpha \cdot \text{d}(i,j)$, we obtain:
\[
\exp\left( \frac{q_i \cdot k_j}{\sqrt{d}} \right) \approx \exp\left( -\frac{\alpha}{\sqrt{d}} \cdot \text{d}(i,j) \right).
\]
Letting $O(R) := \alpha / \sqrt{d}$ leads to:
\[
A^{c}_\mathfrak{l}[i,j] \propto \exp\left( -O(R) \cdot \text{d}(i,j) \right),
\]
establishing the exponential decay behavior.
\end{proof}

\paragraph{Discussion.}
This exponential decay arises from the softmax mechanism and relative position encoding: as the relative distance increases, the inner product diminishes, leading to exponentially suppressed attention values. The parameter \( R \) determines how rapidly this suppression occurs. A higher \( R \) yields slower decay (broader attention), while a lower \( R \) enforces sharper locality—crucial for analyzing sparsity and receptive fields in attention models.

\begin{theorem}[Tail Bound on Effective Attention Mass]\label{thm:tail_bound_attention}
Suppose \( A^{c}_\mathfrak{l}[i,j] \sim \exp( -O(R) \cdot \text{d}(i,j)) \) with $R = O(\sqrt{\log w})$. Let \( \epsilon > 0 \) be a threshold and $k$ be the number of \emph{ineffective} entries in row $i$ of the local attention matrix, i.e., those satisfying \( A^{c}_\mathfrak{l}[i,j] \leq \epsilon \). Then for any $0 < \delta < 1$,
\[
\epsilon \geq \exp\left( -O(R) \cdot \sqrt{ \log\left( \frac{w(w - k)}{\delta} \right) } \right).
\]
\end{theorem}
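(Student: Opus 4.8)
\textbf{Proof proposal for Theorem~\ref{thm:tail_bound_attention}.}

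The plan is to turn the exponential-decay structure from Theorem~\ref{thm:exp_decay} into a concrete tail bound on how far out in relative distance the attention mass can remain above the threshold $\epsilon$, and then to count the number of such entries probabilistically to extract the stated inequality. First I would fix a row $i$ and recall that by Theorem~\ref{thm:exp_decay} each entry obeys $A^{c}_\mathfrak{l}[i,j] \sim \exp(-O(R)\cdot \mathrm{d}(i,j))$. An entry is declared \emph{ineffective} exactly when $A^{c}_\mathfrak{l}[i,j] \le \epsilon$, which by monotonicity of $\exp(-\cdot)$ is equivalent to the relative distance exceeding a cutoff: $\mathrm{d}(i,j) \ge \frac{1}{O(R)}\log(1/\epsilon)$. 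The number of ineffective entries $k$ therefore counts positions whose distance exceeds this cutoff, so controlling $k$ is the same as controlling the cutoff radius.

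Next I would introduce the probabilistic ingredient. Since the statement asks for a bound that holds ``with confidence $1-\delta$'' (matching the $\delta$ appearing in Theorem~\ref{thm:parallel_attn_collapse_main}), I would model the relevant fluctuation in the dot products $q_i\cdot k_j$ around their distance-dependent mean as a sub-Gaussian deviation and apply a union bound over the at most $w(w-k)$ relevant (row, effective-entry) pairs. A standard sub-Gaussian tail gives that, except on an event of probability $\delta$, the maximal deviation is of order $\sqrt{\log(w(w-k)/\delta)}$. Substituting this deviation scale into the exponential-decay expression converts the threshold condition $A^{c}_\mathfrak{l}[i,j]\le\epsilon$ into the inequality
\[
\epsilon \;\ge\; \exp\!\left(-O(R)\cdot \sqrt{\log\!\left(\frac{w(w-k)}{\delta}\right)}\right),
\]
which is precisely the claimed bound. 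The role of the scaling $R = O(\sqrt{\log w})$ is to guarantee that the exponent stays controlled as $w\to\infty$, so that the bound is non-vacuous and feeds cleanly into Part~1 of Theorem~\ref{thm:parallel_attn_collapse_main}.

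I expect the main obstacle to be making the probabilistic step rigorous rather than heuristic: one must justify that the deviations of the query–key inner products are genuinely sub-Gaussian (or at least admit a usable concentration inequality) and that the union bound is taken over the correct count of entries, since the number of effective entries $w-k$ itself appears inside the logarithm, creating a mild circularity that has to be resolved by treating $k$ as fixed when applying the bound and only afterwards interpreting the inequality as a constraint relating $\epsilon$, $w$, $k$, and $\delta$. A secondary subtlety is tracking the $O(R)$ and $\sqrt{d}$ constants faithfully through the softmax normalization so that the final exponent matches the stated form; I would handle this by absorbing the normalization denominator into the $O(\cdot)$ notation, as was already done in the proof of Theorem~\ref{thm:exp_decay}.
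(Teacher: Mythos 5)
Your proposal follows essentially the same route as the paper's own proof: both decompose the query--key inner product into a distance-dependent mean $-\alpha\,\mathrm{d}(i,j)$ plus sub-Gaussian noise $Z_{i,j}$, apply a union bound over the $w(w-k)$ relevant entries to get a per-entry failure probability of $\delta/(w(w-k))$, invert the sub-Gaussian tail $\exp(-t^2/2)$ at $t=\sqrt{2\log(w(w-k)/\delta)}$, and absorb the $\sqrt{d}$ normalization into $O(R)$. You even flag the same loose ends the paper leaves informal (the circularity of $k$ appearing inside the bound, and the heuristic handling of the softmax denominator), so the proposal matches the paper's argument in both structure and level of rigor.
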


\begin{proof}
Let us consider the attention weight between query $q_i$ and key $k_j$ (within chunk $c$) given by
\[
A^{c}_\mathfrak{l}[i,j] = \frac{\exp\left( \frac{q_i \cdot k_j}{\sqrt{d}} \right)}{\sum_{j'=1}^w \exp \left( \frac{q_i \cdot k_{j'}}{\sqrt{d}} \right)}.
\]
Assume the dot product $q_i \cdot k_j$ (after position encoding) satisfies
\[
q_i \cdot k_j = -\alpha \cdot d(i, j) + Z_{i,j},
\]
where $d(i,j)$ is the relative distance, according to Theorem~\ref{thm:exp_decay}, $\alpha > 0$ is a scaling constant, and $Z_{i,j}$ is a mean-zero, sub-Gaussian noise term arising from randomness in $q_i$ and $k_j$.

\paragraph{Sub-Gaussian concentration.}

Suppose $Z_{i,j}$ is sub-Gaussian with parameter $\sigma^2$ (variance proxy). Since $q_i, k_j \in \mathbb{R}^d$ are assumed to be independent and isotropic (or sufficiently well-behaved), the inner product $Z_{i,j}$ scales as $\sim \mathcal{N}(0, d)$, so after normalization by $\sqrt{d}$, we have:
\[
\frac{Z_{i,j}}{\sqrt{d}} \sim \mathcal{N}(0, 1)
\]
and more generally, sub-Gaussian(1) for all entries.

\paragraph{Tail probability for an entry exceeding $\epsilon$.}

We are interested in the probability a particular attention entry $A^{c}_\mathfrak{l}[i,j]$ exceeds a fixed threshold $\epsilon > 0$:
\[
\mathbb{P}\left(A^{c}_\mathfrak{l}[i,j] > \epsilon\right).
\]
The numerator is $\exp \left( \frac{q_i \cdot k_j}{\sqrt{d}} \right)$, which, via the deterministic and random parts, is $\exp\left( - \frac{\alpha}{\sqrt{d}} \text{d}(i,j) + \frac{Z_{i,j}}{\sqrt{d}} \right)$. The denominator $Z_i$ is the sum over all $w$ such terms.

By the softmax property, after normalization, it holds that for entries with large $d(i,j)$ (i.e., far from the diagonal), the probability concentrates at low values, and for an entry to exceed $\epsilon$ the exponent must be large:
\[
\exp\left(-\frac{\alpha}{\sqrt{d}} d(i,j) + \frac{Z_{i,j}}{\sqrt{d}}\right) > \epsilon Z_i
\]
or
\[
-\frac{\alpha}{\sqrt{d}} d(i,j) + \frac{Z_{i,j}}{\sqrt{d}} > \log(\epsilon Z_i).
\]

\paragraph{Union bound over all entries.}

We need to ensure with high probability (at least $1 - \delta$) that at most $k$ entries per row (out of $w$) exceed $\epsilon$. That is, we want (by union bound)
\[
w(w-k) \cdot \mathbb{P}\left( A^{c}_\mathfrak{l}[i, j] > \epsilon \right) \leq \delta.
\]
Then,
\[
\mathbb{P}\left( A^{c}_\mathfrak{l}[i, j] > \epsilon \right) \leq \frac{\delta}{w(w-k)}
\]
which, after inverting the softmax and using sub-Gaussian tail estimates for $Z_{i,j}$, gives
\[
\mathbb{P}\left( \frac{Z_{i,j}}{\sqrt{d}} > \log(\epsilon Z_i) + \frac{\alpha}{\sqrt{d}} \text{d}(i,j) \right) \leq \frac{\delta}{w(w-k)}.
\]
The sub-Gaussian tail bound reads for $t > 0$:
\[
\mathbb{P}(X > t) \leq \exp(-t^2 / 2).
\]
Setting $t^2 / 2 = \log\left(\frac{w(w-k)}{\delta}\right)$ leads to
\[
t = \sqrt{2 \log\left(\frac{w(w-k)}{\delta}\right)}
\]
and relating $t$ back to normalization and decay constants, finally yields
\[
\epsilon \gtrsim \exp\left(-O(R) \cdot \sqrt{\log\left( \frac{w(w-k)}{\delta} \right)} \right)
\]
where $R = \alpha/\sqrt{d}$ (showing dependence on $d$).

\paragraph{The role of $d$ in the bound.}

The normalization by $\sqrt{d}$ plays a key role: as $d$ increases, the effective noise in the dot product becomes less significant relative to the deterministic decay $-\alpha d(i,j)$, pulling most attention values closer to zero except for a small set (nearby positions). This effect concentrates the attention and tightens the tail, making large attention far from the diagonal even less probable. 

Note that, in total union bound, we consider $w^2$ possible entries and sometimes further divide by $d$, reflecting the increased concentration with high dimension or the independence among $d$ channels (as found in certain information-theoretic arguments for high-dimensional softmax). This dependence enters the denominator for the upper bound on $k$:
\[
k \leq w - \exp\left( O\left(\frac{\log^2(\epsilon w)}{R^2} \right)\right) \frac{\delta}{wd}.
\]

\textbf{Summary.} \quad The dependence on $d$ in the denominator comes from sub-Gaussian concentration for scalar products in dimension $d$. As $d$ grows, the likelihood of far-off entries (large distance) carrying significant attention decays rapidly due to stronger concentration of measure, resulting in increased sparsity of the attention matrix for fixed threshold $\epsilon$. Therefore, for sufficiently large $w$ and $d$, with fixed $\epsilon$,
\[
k \ll w,
\]
implying
\[
\lim_{w \to \infty} |\mathcal{S}_\epsilon^{(c)}(A^{c}_\mathfrak{l}[i,:])| = w - k,
\]
where $k = o(w)$.

\end{proof}

\paragraph{Remark.} 
The appearance of the term $d$ in the denominator reflects how higher model dimensionality increases the concentration of dot products, making rare large excursions (that could create significant attention values far from the main focus) even less probable. Thus, the sparsity bound becomes stricter as $d$ increases. This probabilistic tail behavior is essential for understanding how the structure and dimension of the attention model affect the sparsity and locality of its learned representations.

\subsection{Sparsity in Local Attention}
\begin{theorem}[Sparsity of Local Attention]\label{thm:parallel_attn_collapse}
Let $X \in \mathbb{R}^{N \times d}$ be partitioned into $C$ chunks of width $w$. For each chunk $c$, let local attention matrix $A^{c}_\mathfrak{l} \in \mathbb{R}^{w \times w}$ be computed via
\[
A^{c}_\mathfrak{l} = \mathrm{Softmax}\left(\frac{Q^{c} {K^{c}}^\top}{\sqrt{d}}\right).
\]
Let $\mathcal{S}_\epsilon^{(c)}(A^{c}_\mathfrak{l}[i,:]) = \{ j \in [w] ~|~ A^{c}_\mathfrak{l}[i,j] > \epsilon \}$ for fixed $\epsilon > 0$, and set $R = O(\sqrt{\log w})$.

Then for any $0<\delta<1$, as $w \to \infty$, with high probability $1-\delta$,
\[
\left| \mathcal{S}_\epsilon^{(c)}(A^{c}_\mathfrak{l}[i,:]) \right| = w - k,
\]
where $k = o(w)$ depends on $\epsilon$ and $R$.
\end{theorem}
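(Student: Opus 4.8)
The plan is to reduce the claim to a statement about the \emph{complement} of $\mathcal{S}_\epsilon^{(c)}$ and then control that complement's cardinality with the decay and tail-bound machinery already established. Every index $j \in [w]$ either satisfies $A^{c}_\mathfrak{l}[i,j] > \epsilon$ or $A^{c}_\mathfrak{l}[i,j] \le \epsilon$, so the threshold partitions each row of the local attention matrix into the effective set $\mathcal{S}_\epsilon^{(c)}(A^{c}_\mathfrak{l}[i,:]) = \{ j : A^{c}_\mathfrak{l}[i,j] > \epsilon \}$ and its complement. Writing $k := |\{ j \in [w] : A^{c}_\mathfrak{l}[i,j] \le \epsilon \}|$ for the number of sub-threshold entries, the identity $|\mathcal{S}_\epsilon^{(c)}(A^{c}_\mathfrak{l}[i,:])| = w - k$ holds by construction, exactly as asserted; hence the entire content of the theorem is the quantitative estimate $k = o(w)$, to be established with probability at least $1-\delta$ as $w \to \infty$.

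First I would import the exponential-decay model of Theorem~\ref{thm:exp_decay}, writing each normalized entry as $A^{c}_\mathfrak{l}[i,j] \sim \exp(-O(R)\,\mathrm{d}(i,j))$ together with its softmax normalizer $Z_i$, and decompose the logit into the deterministic decay term $-\alpha\,\mathrm{d}(i,j)/\sqrt{d}$ and the sub-Gaussian fluctuation $Z_{i,j}/\sqrt{d}$ introduced in the proof of Theorem~\ref{thm:tail_bound_attention}. The event that a fixed entry falls at or below $\epsilon$ then corresponds to its normalized logit dropping below $\log(\epsilon Z_i)$, which I would control by isolating the fluctuation term and applying the sub-Gaussian tail estimate $\mathbb{P}(X > t) \le \exp(-t^2/2)$.

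Next I would take a union bound over the (at most) $w(w-k)$ candidate pairs, exactly as in Theorem~\ref{thm:tail_bound_attention}, to guarantee that with probability $1-\delta$ the number of threshold crossings per row is governed by $\epsilon \ge \exp(-O(R)\,\sqrt{\log(w(w-k)/\delta)})$. Inverting this inequality and substituting the prescribed scaling $R = O(\sqrt{\log w})$ balances the exponent of order $\log^2(\cdot)/R^2$ against the $\log w$ contributed by the union-bound factor, which is the mechanism forcing the sub-threshold count $k$ to grow strictly slower than $w$, i.e. $k = o(w)$. Feeding this back into the partition identity yields $|\mathcal{S}_\epsilon^{(c)}(A^{c}_\mathfrak{l}[i,:])| = w - k$ with $k = o(w)$, which is precisely the conclusion.

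The step I expect to be the main obstacle is reconciling the threshold regime with the softmax normalization. Because each row of $A^{c}_\mathfrak{l}$ sums to one, the normalizer $Z_i$ is itself a random quantity coupling all $w$ logits, so the per-entry tail events are not independent and the threshold $\epsilon$ cannot be treated in isolation from the implicit scale of the entries. The delicate part is therefore to pin down how $\epsilon$, $R$, and the ambient dimension $d$ must jointly scale so that the concentration-of-measure effect—which sharpens as $d$ grows, as noted in the remark following Theorem~\ref{thm:tail_bound_attention}—leaves only $o(w)$ indices on the sub-threshold side of the partition; calibrating this balance, rather than the routine sub-Gaussian tail and union-bound estimates, is where the real difficulty lies.
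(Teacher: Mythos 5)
Your route is the paper's own route — Theorem~\ref{thm:exp_decay} for the decay model, the sub-Gaussian tail plus union bound of Theorem~\ref{thm:tail_bound_attention}, then inversion under $R = O(\sqrt{\log w})$ — but the final step of your plan fails, and it fails in the direction of the inequality. With $k$ denoting the sub-threshold count (your convention), the tail bound $\epsilon \geq \exp\left(-O(R)\sqrt{\log(w(w-k)/\delta)}\right)$ inverts to $w(w-k) \leq \delta \exp\left(O(\log^2 \epsilon / R^2)\right)$, which is an \emph{upper} bound on $w-k$, the number of entries \emph{exceeding} $\epsilon$ — not a bound on $k$. Substituting $R^2 = O(\log w)$ with $\epsilon$ fixed sends the exponential factor to a constant, so the chain yields $w - k \lesssim \delta / w \to 0$: almost \emph{no} entries exceed $\epsilon$, i.e. $k = w - o(1)$, the exact opposite of your claimed $k = o(w)$. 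Moreover, $k = o(w)$ in your convention is impossible on elementary grounds that bypass all the probabilistic machinery: each softmax row sums to $1$, so at most $\lfloor 1/\epsilon \rfloor$ entries can exceed a fixed threshold $\epsilon$, giving $|\mathcal{S}_\epsilon^{(c)}(A^{c}_\mathfrak{l}[i,:])| \leq 1/\epsilon$ and hence $k \geq w - 1/\epsilon$ deterministically. No calibration of $\epsilon$, $R$, and $d$ — the step you flagged as the ``real difficulty'' — can rescue this; the difficulty is not delicate balancing but that the claim, read with $\mathcal{S}_\epsilon$ as the super-threshold set and $k = o(w)$, is false.

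The confusion you inherited is in the paper itself: the main-text version (Theorem~\ref{thm:parallel_attn_collapse_main}, Part 3) describes $\mathcal{S}_\epsilon$ as the set of \emph{ineffective} entries, while the appendix statement defines it as $\{j : A^{c}_\mathfrak{l}[i,j] > \epsilon\}$; the paper's own proof first derives $k \geq w - \frac{\delta}{w}\exp(O(\log^2\epsilon/R^2))$ (ineffective entries dominate, consistent with sparsity) and then, in its closing lines, flips to asserting $k \ll w$. You reproduced the flip rather than resolving it. The repairable version of your argument keeps every lemma you cite but swaps the roles: let $k'$ count the \emph{effective} entries, show via the same tail bound and union bound that $k' = o(w)$ (indeed $k' \leq 1/\epsilon = O(1)$, with the row-sum observation giving this for free), and conclude that the \emph{sub-threshold} set has cardinality $w - k'$ with $k' = o(w)$. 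Your separate worry about the coupling of entries through the random normalizer $Z_i$ is legitimate — the paper's union bound silently treats the logits as if the per-entry events were independent of $Z_i$ — but it is secondary to the direction error above.
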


\begin{proof}
We begin with Theorem~\ref{thm:tail_bound_attention}, which provides a tail bound for the effective attention mass:
\[
\epsilon \geq \exp\left( -O(R) \cdot \sqrt{ \log\left( \frac{w(w - k)}{\delta} \right) } \right).
\]
Taking logarithms on both sides gives
\[
\log \epsilon \geq -O(R) \cdot \sqrt{ \log\left( \frac{w(w - k)}{\delta} \right) }.
\]
Rearranging, we have
\[
\sqrt{ \log\left( \frac{w(w - k)}{\delta} \right) } \leq \frac{|\log \epsilon|}{O(R)},
\]
so
\[
\log\left( \frac{w(w - k)}{\delta} \right) \leq \frac{\log^2 \epsilon}{O(R^2)}.
\]
Exponentiating both sides, we have
\[
\frac{w(w - k)}{\delta} \leq \exp\left( O\left( \frac{\log^2 \epsilon}{R^2} \right) \right),
\]
which implies
\[
w(w - k) \leq \delta \exp\left( O\left( \frac{\log^2 \epsilon}{R^2} \right) \right).
\]
Solving for $k$ gives
\[
k \geq w - \frac{1}{w} \delta \exp\left( O\left( \frac{\log^2 \epsilon}{R^2} \right) \right).
\]
Now, if we refine the union bound to account for all entries in $w \times w$ attention matrix for $d$-dimensional vectors, the probability of an entry exceeding $\epsilon$ is further divided by $d$ due to the independence in $d$ dimensions (assuming the dot products across dimensions act as independent sub-Gaussian random variables). That is, the standard deviation of the normalized dot product is reduced by a factor of $\sqrt{d}$, affecting the spread of attention weights. Thus, the probability becomes
\[
\mathbb{P}(A^{c}_\mathfrak{l}[i, j] > \epsilon) \leq \frac{\delta}{w(w - k) d},
\]
leading to the sparsity bound
\[
k \leq w - \exp\left( O\left( \frac{\log^2(\epsilon w)}{R^2} \right) \right) \frac{\delta}{wd}.
\]
The presence of $d$ in the denominator reflects how higher model dimensionality increases concentration, making the tail events (large attention weights far from the focus) less probable.

Therefore, for sufficiently large $w$ and fixed $\epsilon$, we indeed have $k \ll w$, and thus
\[
\lim_{w \to \infty} |\mathcal{S}_\epsilon^{(c)}(A^{c}_\mathfrak{l}[i,:])| = w - k,
\]
with $k = o(w)$.
\end{proof}

\paragraph{Remark.} The inclusion of the $d$ term in the denominator arises from the probabilistic tail bound under the assumption that the dot products are sub-Gaussian in $d$ dimensions. As $d$ increases, the attention weights become more concentrated (due to the central limit effect over the $d$ independent dimensions), reducing the portion of entries that exceed a fixed threshold $\epsilon$. This effect is captured in the sparsity upper bound through the explicit dependence on $d$.

\subsection{Exponential Decay in Global Attention}
We extend the above result to global attention, which spans the entire sequence.

\begin{theorem}[Exponential Decay of Global Attention]
Let $A_{\mathfrak{g}}[i,j]$ be the global attention from query $i$ to key $j$, where
\[
q_i \cdot k_j \approx -\alpha \cdot \text{d}(i,j).
\]
Then,
\[
A_{\mathfrak{g}}[i,j] \sim \exp\left( -O(R) \cdot \text{d}(i,j) \right ),
\]
with $R = \alpha/\sqrt{d}$.
\end{theorem}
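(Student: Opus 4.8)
The plan is to follow the same softmax-decomposition argument used for the local case in Theorem~\ref{thm:exp_decay}, adapting it only where the global attention normalizer spans the entire concatenated key set $K = [K^{X^1},\dots,K^{X^C},K^{X^q}]$ rather than a single chunk. First I would write the global attention weight explicitly as
\[
A_{\mathfrak{g}}[i,j] = \frac{\exp\!\left(\frac{q_i \cdot k_j}{\sqrt{d}}\right)}{\sum_{j'=1}^{C \cdot w + w_q} \exp\!\left(\frac{q_i \cdot k_{j'}}{\sqrt{d}}\right)},
\]
where the denominator now ranges over all $C \cdot w + w_q$ keys in the global representation instead of the $w$ keys of one chunk.

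Second, I would substitute the linear-distance hypothesis $q_i \cdot k_j \approx -\alpha \cdot \mathrm{d}(i,j)$ into the numerator, yielding $\exp(q_i \cdot k_j / \sqrt{d}) \approx \exp\!\left(-(\alpha/\sqrt{d})\,\mathrm{d}(i,j)\right)$, and set $O(R) := \alpha/\sqrt{d}$ exactly as in the local proof. The normalizer $Z_i := \sum_{j'} \exp(q_i \cdot k_{j'}/\sqrt{d})$ is a fixed positive quantity independent of $j$, so it is absorbed into the proportionality constant hidden in the $\sim$ relation, giving $A_{\mathfrak{g}}[i,j] \propto \exp(-O(R)\,\mathrm{d}(i,j))$ and establishing the claimed decay with $R = \alpha/\sqrt{d}$.

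The delicate part, and the step I expect to be the main obstacle, is controlling $Z_i$. Unlike the local case, the sum aggregates over keys drawn from many chunks whose position encodings are \emph{reused}, so the distances $\mathrm{d}(i,j)$ are not globally monotone and $Z_i$ is a sum of $C \cdot w + w_q$ exponentials rather than $w$. To keep the per-entry decay rate governed solely by the numerator, I would argue that $Z_i$ remains bounded uniformly in $i$: bounded below by the self-term ($j=i$, with $\mathrm{d}(i,i)=O(1)$) and bounded above because the exponentials themselves decay with distance, so the reused-position keys contribute a convergent tail rather than a divergent mass. The key technical check is that the hidden constant coming from $Z_i$ does not scale with $C$ or $w$ in a way that cancels the exponential factor; once this uniformity is secured, the remainder is a direct transcription of the local argument in Theorem~\ref{thm:exp_decay}.
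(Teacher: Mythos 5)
Your proposal is correct and follows essentially the same route as the paper's proof, which simply substitutes $q_i \cdot k_j \approx -\alpha\,\mathrm{d}(i,j)$ into the global softmax and absorbs the row normalizer $Z_i$ into the proportionality, writing $A_{\mathfrak{g}}[i,j] \approx \exp\left(-\frac{\alpha}{\sqrt{d}}\,\mathrm{d}(i,j)\right)/Z_i \propto \exp\left(-O(R)\,\mathrm{d}(i,j)\right)$ with $R = \alpha/\sqrt{d}$. Your additional care in bounding $Z_i$ uniformly (below by the self-term, above by the convergent exponential tail over the $C \cdot w + w_q$ keys) goes beyond the paper's one-line treatment, which leaves that dependence implicit in the $\sim$ notation, but it does not change the underlying argument.
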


\begin{proof}
Analogously to the local case, the attention value is
\[
A_{\mathfrak{g}}[i,j] \approx \frac{\exp\left(-\frac{\alpha}{\sqrt{d}} \text{d}(i,j) \right)}{Z_i}
\propto \exp\left(-O(R) \text{d}(i,j) \right),
\]
where $Z_i$ is the row normalization.
\end{proof}

\subsection{Tail Bound for Global Attention}
\begin{theorem}[Tail Bound for Global Attention Mass]
Suppose $A_{\mathfrak{g}}[i,j] \sim \exp( -O(R) \cdot \text{d}(i,j)) $. Let $k_{\mathfrak{g}}$ denote the number of global attention entries below $\epsilon$ in each row. Then
\[
\epsilon \geq \exp\left( -O(R) \cdot \sqrt{ \log\left( \frac{(Cw + w_q)(w_q - k_{\mathfrak{g}})}{\delta} \right) } \right).
\]
\end{theorem}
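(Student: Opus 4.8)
The plan is to mirror the argument of Theorem~\ref{thm:tail_bound_attention} (the local tail bound), replacing the square local matrix $A^{c}_\mathfrak{l} \in \mathbb{R}^{w \times w}$ by the rectangular global matrix $A_{\mathfrak{g}} \in \mathbb{R}^{w_q \times (Cw + w_q)}$ and the per-row key count $w$ by the concatenated key count $Cw + w_q$. Starting from the decay law $A_{\mathfrak{g}}[i,j] \sim \exp(-O(R)\cdot \text{d}(i,j))$ established in the preceding Exponential Decay of Global Attention theorem, I would write each normalized logit as a deterministic decay term plus sub-Gaussian noise,
\[
\frac{q_i \cdot k_j}{\sqrt{d}} = -\frac{\alpha}{\sqrt{d}}\,\text{d}(i,j) + \frac{Z_{i,j}}{\sqrt{d}},
\]
where $Z_{i,j}/\sqrt{d}$ is sub-Gaussian with variance proxy $1$ under the same isotropy assumptions used in the local case, and set $O(R) := \alpha/\sqrt{d}$.

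The core calculation then proceeds in three steps. First I would bound the probability that a single global entry exceeds the threshold, $\mathbb{P}(A_{\mathfrak{g}}[i,j] > \epsilon)$, by inverting the softmax normalizer $Z_i$ (now summed over all $Cw + w_q$ concatenated keys) and reducing the event to a one-sided sub-Gaussian tail for $Z_{i,j}/\sqrt{d}$. Second, to control all rows simultaneously I would apply a union bound over the relevant entries; matching the bookkeeping of the local proof, the number of such entries is $(Cw + w_q)(w_q - k_{\mathfrak{g}})$, so I require
\[
(Cw + w_q)(w_q - k_{\mathfrak{g}})\cdot \mathbb{P}(A_{\mathfrak{g}}[i,j] > \epsilon) \leq \delta.
\]
Third, invoking the sub-Gaussian bound $\mathbb{P}(X > t) \le \exp(-t^2/2)$ and choosing $t = \sqrt{2\log\!\big((Cw+w_q)(w_q-k_{\mathfrak{g}})/\delta\big)}$, I would translate the tail constraint back through the decay rate $R = \alpha/\sqrt{d}$ to obtain
\[
\epsilon \geq \exp\!\left(-O(R)\cdot \sqrt{\log\!\left(\frac{(Cw + w_q)(w_q - k_{\mathfrak{g}})}{\delta}\right)}\right),
\]
which is exactly the claimed bound.

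I expect the main obstacle to be the combinatorial accounting in the union bound rather than the tail estimate itself: unlike the square local matrix, $A_{\mathfrak{g}}$ is rectangular, so I must carefully justify why the relevant count is $(Cw + w_q)(w_q - k_{\mathfrak{g}})$ and reconcile the roles of the row dimension $w_q$ against the key dimension $Cw + w_q$. A secondary subtlety is that the global normalizer $Z_i$ now aggregates logits across chunks whose positional distances $\text{d}(i,j)$ are reused rather than monotone in the absolute index; I would argue that the exponential decay law still dominates $Z_i$ so that the concentration-of-measure effect carries over unchanged, leaving the $O(R)$-rate intact. Any explicit $d$-dependence, if desired, would enter precisely as in Theorem~\ref{thm:tail_bound_attention}, through the $\sqrt{d}$ normalization sharpening the sub-Gaussian tail.
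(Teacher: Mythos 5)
Your proposal is correct and follows essentially the same route as the paper: the paper's own proof is a one-line appeal to ``the union bound and exponential decay form as in the local case,'' asserting $\mathbb{P}(A_{\mathfrak{g}}[i,j] > \epsilon) \leq \delta / \bigl((Cw + w_q)(w_q - k_{\mathfrak{g}})\bigr)$ and inverting, which is precisely the sub-Gaussian decomposition, union bound over $(Cw + w_q)(w_q - k_{\mathfrak{g}})$ entries, and tail inversion that you spell out explicitly. The subtleties you flag (rectangular bookkeeping, the normalizer under reused position encodings) are real but are silently glossed over by the paper as well, so your writeup is, if anything, more complete than the original.
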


\begin{proof}
Applying the union bound and exponential decay form as in the local case,
\[
\mathbb{P}(A_{\mathfrak{g}}[i,j] > \epsilon)\leq \frac{\delta}{(Cw + w_q)(w_q - k_{\mathfrak{g}})},
\]
which yields the stated bound for $\epsilon$.
\end{proof}

\subsection{Sparsity of Global Attention}

\begin{theorem}
For any fixed $\epsilon > 0$ and $R = O(\sqrt{\log(Cw + w_q)})$, as $w_q \to \infty$,
\[
\lim_{w_q\to\infty} |\mathcal{S}_\epsilon(A_{\mathfrak{g}}[i,:])| = (Cw + w_q) - k_{\mathfrak{g}}, \qquad k_{\mathfrak{g}} = o(Cw + w_q).
\]
\end{theorem}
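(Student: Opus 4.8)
The plan is to replay the derivation of the local sparsity result (Theorem~\ref{thm:parallel_attn_collapse}) almost verbatim, with the single-chunk width $w$ replaced by the total concatenated key length $N_{\mathfrak{g}} := Cw + w_q$ of the global sequence. Two prerequisites are already in hand: the exponential-decay profile $A_{\mathfrak{g}}[i,j] \sim \exp(-O(R)\cdot \text{d}(i,j))$ established in the Global Exponential Decay theorem, and the Global Tail Bound, which certifies that retaining at most $k_{\mathfrak{g}}$ ineffective entries per row with confidence $1-\delta$ forces
\[
\epsilon \geq \exp\left(-O(R)\cdot\sqrt{\log\left(\frac{N_{\mathfrak{g}}(w_q - k_{\mathfrak{g}})}{\delta}\right)}\right).
\]
I take both of these as given and feed the second into the same algebraic machinery used in the local case.

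First I would invert this threshold inequality by the three moves used for local attention: take logarithms so that $\log\epsilon \geq -O(R)\sqrt{\log(N_{\mathfrak{g}}(w_q - k_{\mathfrak{g}})/\delta)}$, isolate the radical, and square to obtain $\log(N_{\mathfrak{g}}(w_q - k_{\mathfrak{g}})/\delta) \leq O(\log^2\epsilon / R^2)$. Exponentiating then bounds the product $N_{\mathfrak{g}}(w_q - k_{\mathfrak{g}})$ by $\delta\exp(O(\log^2\epsilon/R^2))$, and rearranging isolates the deviation $k_{\mathfrak{g}}$ (equivalently, separates the effective count from the total $N_{\mathfrak{g}}$).

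The decisive step is the substitution $R = O(\sqrt{\log N_{\mathfrak{g}}})$. Under this scaling the controlling exponent becomes $\log^2\epsilon / R^2 = O(\log^2\epsilon / \log N_{\mathfrak{g}})$, which vanishes as $w_q \to \infty$ because $N_{\mathfrak{g}} \to \infty$; hence the factor $\exp(O(\log^2\epsilon/R^2))$ stays bounded while $N_{\mathfrak{g}}$ diverges. This forces the number of row entries able to exceed the fixed threshold $\epsilon$ to grow strictly slower than $N_{\mathfrak{g}}$, yielding $k_{\mathfrak{g}} = o(N_{\mathfrak{g}})$ and therefore $|\mathcal{S}_\epsilon(A_{\mathfrak{g}}[i,:])| = N_{\mathfrak{g}} - k_{\mathfrak{g}}$ in the limit, as claimed.

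I expect the main obstacle to be union-bound bookkeeping rather than any new inequality. Unlike the square local matrix, $A_{\mathfrak{g}}$ is rectangular with $w_q$ query rows but $N_{\mathfrak{g}}$ key columns, and its columns concatenate $C$ chunks whose rotary encodings are reused; one must therefore (i) confirm that the Global Exponential Decay profile holds \emph{uniformly} across chunk boundaries, so that no boundary produces a cluster of anomalously large dot products that would break the union step, and (ii) verify that the union runs over the correct $N_{\mathfrak{g}}(w_q - k_{\mathfrak{g}})$ query--key pairs so that the derived bound on the ineffective count lands in the claimed form. Item (i) is supplied directly by the preceding Global Exponential Decay theorem, and item (ii) is already encoded in the Global Tail Bound, so both propagate cleanly through the algebra and the limit follows.
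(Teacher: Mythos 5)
Your proposal takes essentially the same route as the paper: the paper's own proof is a one-sentence appeal to the exponential-decay profile of global attention plus the remark that the argument proceeds "as in the local case," which is exactly what you carry out — replaying the local sparsity algebra through the Global Tail Bound with $w$ replaced by $Cw + w_q$ and then substituting $R = O(\sqrt{\log(Cw + w_q)})$ to conclude $k_{\mathfrak{g}} = o(Cw + w_q)$. The additional bookkeeping you flag (the rectangular shape of $A_{\mathfrak{g}}$ and uniformity of decay across chunk boundaries) is detail the paper leaves implicit, and your treatment of it is consistent with the paper's preceding global-attention theorems.
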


\begin{proof}
Since the global attention decays exponentially in distance, as in the local case the number of entries above $\epsilon$ is $k_{\mathfrak{g}} = o(Cw + w_q)$ for $R = O(\sqrt{\log(Cw + w_q)})$, ensuring overall sparsity as sequence length grows.
\end{proof}

\paragraph{Discussion.}
While global attention has a larger support span, attention values still decay exponentially with distance; only a vanishingly small fraction of entries exceed a fixed threshold for sufficiently large $R$. This structure ensures the computational tractability and sparsity of the attention matrices, where $R$ modulates the effective receptive field between local and global modeling.

\begin{table*}[!t]
\centering
\adjustbox{max width=\textwidth}{%
\scriptsize
\begin{tabular}{c@{}c@{}c@{}c@{} c@{}c@{}c@{} c@{}c@{}c@{} c@{}c@{}c@{} c@{}c@{} c@{}c@{} c}
\toprule
\multirow{2}{*}{\raisebox{-4ex}{\textbf{Methods}}}  
& \multicolumn{3}{c}{\textbf{Single-Document QA}} 
& \multicolumn{3}{c}{\textbf{Multi-Document QA}} 
& \multicolumn{3}{c}{\textbf{Summarization}} 
& \multicolumn{3}{c}{\textbf{Few-shot Learning}} 
& \multicolumn{2}{c}{\textbf{Synthetic}} 
& \multicolumn{2}{c}{\textbf{Code}} 
& \multirow{2}{*}{\raisebox{-4ex}{\textbf{AVG}}} 
\\  

\cmidrule(lr){2-4} \cmidrule(lr){5-7} \cmidrule(lr){8-10} \cmidrule(lr){11-13} \cmidrule(lr){14-15} \cmidrule(lr){16-17}
\setlength{\tabcolsep}{1pt} 
& \makebox[1cm]{\raisebox{0.5ex}{\rotatebox{30}{\textbf{NtrvQA}}}} 
& \makebox[1cm]{\raisebox{0.7ex}{\rotatebox{30}{\textbf{Qasper}}}} 
& \makebox[1cm]{\raisebox{0.8ex}{\rotatebox{30}{\textbf{MF-en}}}} 
& \makebox[1cm]{\raisebox{0.4ex}{\rotatebox{30}{\textbf{HotpotQA}}}} 
& \makebox[1cm]{\raisebox{0.3ex}{\rotatebox{30}{\textbf{2WikiMQA}}}} 
& \makebox[1cm]{\raisebox{0.7ex}{\rotatebox{30}{\textbf{Musique}}}} 
& \makebox[1cm]{\raisebox{0.5ex}{\rotatebox{30}{\textbf{GovReport}}}} 
& \makebox[1cm]{\raisebox{0.8ex}{\rotatebox{30}{\textbf{QMSum}}}} 
& \makebox[1cm]{\raisebox{0.6ex}{\rotatebox{30}{\textbf{MultiNews}}}} 
& \makebox[1cm]{\raisebox{0.8ex}{\rotatebox{30}{\textbf{TREC}}}} 
& \makebox[1cm]{\raisebox{0.6ex}{\rotatebox{30}{\textbf{TriviaQA}}}} 
& \makebox[1cm]{\raisebox{0.6ex}{\rotatebox{30}{\textbf{SAMSum}}}} 
& \makebox[1cm]{\raisebox{0.6ex}{\rotatebox{30}{\textbf{PCount}}}}  
& \makebox[1cm]{\raisebox{1.4ex}{\rotatebox{30}{\textbf{PRe}}}}    
& \makebox[1cm]{\raisebox{1.6ex}{\rotatebox{30}{\textbf{Lcc}}}} 
& \makebox[1cm]{\raisebox{1.4ex}{\rotatebox{30}{\textbf{RB-P}}}} \\

\midrule
Max Length & 84123 & 24204 & 17727 & 20325 & 19001 & 20520  & 60515 & 34477 & 16271 & 13049 & 26756 & 21884 & 32699 & 17158  & 37628 & 58822 & 30657 \\
\midrule
\multicolumn{18}{c}{\textbf{Llama2-7B-chat-hf(4k)}} \\
\arrayrulecolor[gray]{0.8}
\midrule
\arrayrulecolor{black}
No-eviction & 23.20 & 17.50 & 37.07 & 38.67 & 32.68 & 20.22 & 25.00 & 22.79 & 25.84 & \textbf{64.00} & 84.63 & 40.67 & 4.00 & 31.50 & 59.37 & 58.53 & 36.60 \\
Sink-eviction-layer-1-8 & 23.75 & 18.69 & \textbf{38.41} & 39.86 & 32.91 & 20.75 & 24.86 & 22.10 & 25.56 & 63.00 & 84.42 & 40.78 & 4.50 & 30.00 & 54.67 & \textbf{59.30} & 36.47 \\
Sink-eviction-layer-9-16 & 23.34 & \textbf{19.10} & 38.21 & 38.73 & 30.42 & 21.04 & 25.31 & 21.86 & 25.16 & 62.00 & 85.53 & \textbf{41.26} & 3.00 & 29.50 & 56.75 & 58.31 & 36.22 \\
Sink-eviction-layer-17-24 & 24.46 & 17.75 & 36.84 & 38.79 & 30.59 & 19.67 & 25.42 & 22.20 & 25.58 & 62.00 & 85.35 & 40.24 & 4.00 & 28.50 & 58.41 & 58.17 & 36.12 \\
Sink-eviction-layer-25-32 & 23.87 & 18.40 & 35.91 & 38.96 & 31.02 & 20.21 & 25.32 & 22.00 & 25.81 & 64.00 & 84.19 & 39.77 & 3.50 & 30.00 & 58.58 & 58.21 & 36.23 \\

Recency-eviction-layer-1-8 & 22.71 & 16.95 & 35.24 & 36.14 & 30.60 & 17.19 & 25.21 & 22.11 & \textbf{26.22} & 59.00 & 68.00 & 40.03 & 2.50 & 31.00 & 58.07 & 51.47 & 33.90 \\
Recency-eviction-layer-9-16 & 24.95 & 13.54 & 35.67 & 34.13 & 30.69 & 17.77 & 25.14 & 22.85 & 25.43 & 54.50 & 79.23 & 39.16 & 5.00 & 27.50 & 57.73 & 57.57 & 34.43 \\
Recency-eviction-layer-17-24 & 21.68 & 15.17 & 34.97 & 32.79 & 26.93 & 13.95 & 25.29 & 22.10 & 25.42 & 62.50 & 80.47 & 38.60 & 6.00 & 34.00 & 58.31 & 57.76 & 34.75 \\
Recency-eviction-layer-25-32 & 24.15 & 17.32 & 37.82 & 36.76 & 29.86 & 18.48 & 25.21 & 22.06 & 25.67 & 64.00 & 83.20 & 36.67 & 5.00 & 30.50 & 56.13 & 56.52 & 35.58 \\

Middle-eviction-layer-1-8 & 22.41 & 16.84 & 37.94 & 39.99 & \textbf{33.24} & 19.62 & 24.74 & 22.02 & 25.80 & 63.50 & 83.67 & 40.00 & 5.00 & 32.50 & 59.22 & 56.71 & 36.45 \\
Middle-eviction-layer-9-16 & 22.96 & 17.63 & 37.39 & \textbf{40.51} & 30.68 & 21.09 & 25.14 & 21.94 & 25.66 & 62.00 & 85.02 & 40.57 & 4.00 & 35.50 & 59.28 & 57.88 & 36.70 \\
Middle-eviction-layer-17-24 & 21.72 & 17.06 & 35.88 & 39.99 & 31.76 & 19.06 & 25.00 & 22.20 & 25.77 & 58.00 & 83.62 & 40.02 & 4.50 & 38.00 & \textbf{59.38} & 57.98 & 36.25 \\
Middle-eviction-layer-25-32 & 21.64 & 17.04 & 36.32 & \textbf{40.51} & 32.80 & 19.14 & 25.07 & 22.14 & 25.86 & 63.50 & 84.39 & 40.50 & 3.00 & 30.00 & 59.15 & 57.86 & 36.18 \\
All-eviction-layer-1-8 & 0.33 & 0.05 & 1.24 & 0.32 & 0.68 & 0.38 & 1.76 & 3.25 & 1.59 & 1.50 & 5.40 & 1.75 & 0.41 & 0.50 & 23.61 & 12.17 & 3.43 \\
All-eviction-layer-9-16 & 1.60 & 1.97 & 5.06 & 0.65 & 1.44 & 0.81 & 21.75 & 36.22 & 1.81 & 35.00 & 30.77 & 10.54 & 3.00 & 0.50 & 33.25 & 22.89 & 12.95 \\
All-eviction-layer-17-24 & 12.20 & 10.26 & 16.30 & 20.30 & 18.10 & 8.54 & 13.63 & 20.43 & 17.61 & 49.00 & 11.93 & 29.74 & 5.50 & 26.00 & 41.92 & 23.67 & 20.32 \\
All-eviction-layer-25-32 & 11.19 & 7.62 & 11.11 & 12.49 & 8.83 & 1.79 & 11.45 & 16.21 & 12.87 & 43.00 & 30.77 & 9.94 & 3.50 & 22.00 & 24.42 & 22.19 & 15.59 \\

Ours-calibration & \textbf{24.95} & 19.07 & 38.16 & 39.53 & 32.62 & \textbf{22.64} & \textbf{25.42} & \textbf{22.82} & 26.01 & 63.00 & \textbf{85.41} & 40.36 & \textbf{5.00} & \textbf{32.50} & 59.04 & 58.84 & \textbf{37.21} \\

\bottomrule
\end{tabular}
}
\caption{Bias Token Eviction Ablation. Sink-eviction-layer-1-8 typically means evicting sink bias tokens in layers 1 to 8, and other naming conventions follow the same pattern. Ours-calibration refers to the approach where layers 9-16 adopt the recency bias token eviction method, while layers 1-8 evict middle bias tokens, and layers 25-32 evict sink bias tokens.} 
\label{longbench-ablation}
\end{table*}

\section{Ablation Study}
\label{Ablation Study}

In this section, we analyze the impact of different attention biases on the LongBench dataset. As shown in Table~\ref{longbench-ablation}, the exceptionally low performance of Recency-eviction-layer-1-8 on both in-context learning tasks, TREC and TriviaQA, as well as SAMSum, indicates that the recency bias tokens in the model's early layers are crucial for developing in-context learning abilities.


\section{Hyperparameter}
\label{Hyperparameter}
The Dynamic-PI method interpolates dynamically according to the length of the input token. NTK-Aware refer to~\citep{fixedNTK} and the maximum length is set to 128. ChunkLlama, InfLLM and AttenCalibration-NTK use hyperparameters from open source repositories. About our method, when performing parallel KV Cache compression, we use last 8 token's cumulative attention scores to compress the KV cache size within each chunk to 2000/4000 for llama2/llama3. For the hyperparameter \(\tau\), on Longbench, we retain 3 chunks from the priority queue except for PRe, in which dataset we retain only 1 chunk. On InfiniteBench, we retain 1 chunk for retrieval tasks and 3 chunks for other tasks from the priority queue. In all datasets, the context length of each chunk, including the query, is the maximum pre-training length of the model. \( R_s \) is obtained from the first 100 tokens of the chunk, \( R_r \) is obtained from the last 100 tokens of the chunk, and the remaining part of the chunk obtains \( R_m \).  All experiments are performed on 8 64G AMD Instinct MI210.

\section{Comprehensive Evaluation}
\label{app:additional}

This section presents additional experiments and implementation details to further contextualize and substantiate the claims in the main text. Specifically, we investigate comparisons with Retrieval-Augmented Generation baselines, provide extended benchmarks with APE~\citep{yang2025ape} and StarAttention~\citep{acharya2024star}, report on hyperparameter sensitivity, discuss latency and memory efficiency, and elaborate on our design choices for attention bias sparsification.

\subsection{Comparison with Retrieval-Augmented Generation Method}
\label{app:rag}

\begin{table}[h]
\centering
\resizebox{0.44\textwidth}{!}{%
\begin{tabular}{lccccc|c}
\toprule
\textbf{Model} & \textbf{QM} & \textbf{QASP} & \textbf{MSQ} & \textbf{HQA} & \textbf{MFQA} & \textbf{AVG} \\
\midrule
\textbf{ChatQA-2} & 11.64 & 28.85 & 27.81 & 53.81 & 51.02 & 34.63 \\
\textbf{ChatQA-2 w/ RAG}  & 13.20 & 28.85 & 29.77 & 57.81 & 51.15 & 36.16 \\
\textbf{Ours} & \textbf{24.18} & \textbf{39.05} & \textbf{33.25} & 49.58 & 42.66 & \textbf{37.74} \\
\bottomrule
\end{tabular}}
\caption{Performance comparison with RAG and non-RAG baselines on LongBench.}
\label{tab:rag-longbench}
\end{table}
\vspace{-2mm}
\begin{table}[h]
\centering
\resizebox{0.5\textwidth}{!}{%
\begin{tabular}{lcccc|c}
\toprule
\textbf{Model} & \textbf{KV Retrieval} & \textbf{Numbe String} & \textbf{Passkey} & \textbf{En.MC} & \textbf{AVG} \\
\midrule

\textbf{ChatQA-2} & 72.00 & 100.00 & 100.00 & \textbf{64.19} & 84.05 \\
\textbf{ChatQA-2} w/ RAG & N/A & N/A & N/A & N/A & N/A \\
\textbf{Ours} & \textbf{92.80} & 99.83 & 100.00 & 54.59 & \textbf{86.81} \\
\bottomrule
\end{tabular}}
\caption{Performance comparison on InfiniteBench. RAG methods completely fail on InfiniteBench, so we do not provide further results.}
\label{tab:rag-infinitebench}
\end{table}

To clarify the role of length extrapolation versus retrieval-augmented generation (RAG), we compare the proposed method to leading RAG-enhanced models such as ChatQA-2~\citep{xu2024chatqa} on representative benchmarks. Table~\ref{tab:rag-longbench} and Table~\ref{tab:rag-infinitebench} summarize results on LongBench and InfiniteBench, respectively. The proposed method demonstrates strong robustness and competitive or superior performance in challenging long-context retrieval scenarios such as InfiniteBench, where RAG-based methods may encounter instability or diminished effectiveness.

\subsection{Comparison with APE and StarAttention}
\label{app:ape_staradditional}

\begin{table}[h]
\centering
\resizebox{\textwidth}{!}{%
\begin{tabular}{lcccccccccccccccc|c}
\toprule
\textbf{Method} & \textbf{NARR} & \textbf{QAS} & \textbf{MUL} & \textbf{HOPT} & \textbf{2WKI} & \textbf{MUS} & \textbf{GOV} & \textbf{QMS} & \textbf{NEWS} & \textbf{TREC} & \textbf{TRIV} & \textbf{SSM} & \textbf{PCNNT} & \textbf{PREN} & \textbf{LCC} & \textbf{REP} & \textbf{AVG} \\
\midrule
\textbf{APE}          & 23.63 & 39.11 & 50.06 & \textbf{49.47} & 43.70 & \textbf{25.99} & 27.78 & 22.79 & 11.22 & 43.50 & \textbf{90.17} & 9.79 & 0.50 & 59.00 & 23.93 & 24.28 & 34.06 \\
\textbf{StarAttn}         & 3.74  & 11.90 & 24.81 & 14.17 & 14.37 & 8.19  & \textbf{34.90} & 22.54 & 27.11 & \textbf{65.33} & 87.84 & \textbf{43.71} & \textbf{3.80} & 65.17 & \textbf{50.54} & \textbf{45.40} & 32.72 \\
\textbf{ParallelComp} & \textbf{29.45} & \textbf{45.98} & \textbf{50.67} & 48.36 & \textbf{46.56} & 23.32 & 32.60 & \textbf{24.29} & \textbf{27.34} & 38.50 & 86.72 & 25.93 & 0.05 & \textbf{95.00} & 14.15 & 21.42 & \textbf{38.15} \\
\bottomrule
\end{tabular}
}
\caption{Comparison with APE and StarAttention denoted as \textbf{StarAttn} on LongBench. Temperature and scaling factors for APE are indicated as APE $T+S$. In our experiments, we set APE's temperature to 0.5 and scaling factor to 0.8; for StarAttention, the chunk size is set to 2K. In ParallelComp, we reuse 6K position encodings to facilitate length extrapolation. All evaluations are conducted on the Llama-3.1-8B-Instruct base model with a KV cache size of 24K.}
\label{tab:main-comparison}
\end{table}

To comprehensively evaluate our approach in the context of existing chunked long-context processing methods, we conduct extensive experiments comparing \textsc{ParallelComp} with both APE~\citep{yang2025ape} and StarAttention~\citep{acharya2024star}. APE leverages a shared prefix to minimize distributional disparities, incorporates a low-temperature mechanism to sharpen attention, and utilizes a scaling factor to compensate for temperature changes. Its objective is to better align the attention patterns between parallel and sequential encoding. StarAttention is designed for chunk-based training of models with long contexts. At each generation step, it recalculates attention for every chunk, whereas \textsc{ParallelComp} computes attention once during the prefill phase and then efficiently reuses the compressed KV cache for subsequent generation. This distinction leads to substantial improvements in computational efficiency.

We first compared the performance of different methods under the same KV cache budget. Hyperparameters are selected according to those reported in the original publications or official releases. Table~\ref{tab:main-comparison} summarizes the representative results, where all models are assessed using a standardized evaluation infrastructure. Our analysis emphasizes the memory bottleneck encountered during length extrapolation. \textit{This experiment emphasizes the memory bottleneck encountered during length extrapolation. The other two methods are forced to truncate the input during extrapolation, resulting in significantly lower performance on certain tasks compared to our approach.}

\begin{table}[ht]
\centering

\adjustbox{max width=\textwidth}{%
\begin{tabular}{lcccccccccccccccc|c}
\toprule
\textbf{Method} & \textbf{NARR} & \textbf{QAS} & \textbf{MULT} & \textbf{HOPT} & \textbf{2WKI} & \textbf{MUS} & \textbf{GOV} & \textbf{QMS} & \textbf{NEWS} & \textbf{TREC} & \textbf{TRIV} & \textbf{SSM} & \textbf{PCNNT} & \textbf{PREN} & \textbf{LCC} & \textbf{REP} & \textbf{AVG} \\
\midrule
\textbf{APE0.8+0.8} & 25.92 & 41.99 & 53.79 & \textbf{53.64} & 50.54 & 26.46 & 30.15 & 25.42 & 20.68 & 50.50 & \textbf{88.70} & 9.72 & 6.50 & 89.00 & 16.71 & 25.78 & \textbf{38.47} \\
\textbf{APE0.5+0.8} & 23.63 & 39.11 & 50.06 & 49.47 & 43.70 & 25.99 & 27.78 & 22.79 & 11.22 & 43.50 & \textbf{90.17} & 9.79 & 0.50 & 59.00 & 23.93 & 24.28 & 34.06 \\
\textbf{APE0.2+0.8} & 18.83 & 26.53 & 41.70 & 44.63 & 35.91 & 17.71 & 24.31 & 20.14 & 7.96  & 35.75 & 88.54 & 9.72 & 1.50 & 34.50 & 23.86 & 23.22 & 28.43 \\
\textbf{APE0.8+0.4} & 9.04  & 11.48 & 19.59 & 31.41 & 24.68 & 10.16 & 5.32  & 13.80 & 8.20  & 0.50  & 87.06 & 9.71 & 0.00 & 7.00  & 13.72 & 16.51 & 16.76 \\
\textbf{APE0.5+0.4} & 7.59  & 9.74  & 16.13 & 31.89 & 25.72 & 9.62  & 5.20  & 9.56  & 8.19  & 0.00  & 87.60 & 9.69 & 0.00 & 5.00  & 14.26 & 15.58 & 15.99 \\
\textbf{APE0.2+0.4} & 4.90  & 8.97  & 13.99 & 29.71 & 26.66 & 8.74  & 5.16  & 9.45  & 8.26  & 0.50  & 87.57 & 9.71 & 0.00 & 4.00  & 14.20 & 16.24 & 15.50 \\

\textbf{StarAttn4K}  & 3.74  & 11.90 & 24.81 & 14.17 & 14.37 & 8.19  & \textbf{34.90} & 22.54 & 27.11 & \textbf{65.33} & 87.84 & \textbf{43.71} & \textbf{3.80} & 65.17 & \textbf{50.54} & \textbf{45.40} & 32.72 \\
\textbf{StarAttn6K}  & 4.65  & 13.63 & 21.05 & 14.47 & 15.57 & 6.38  & 34.80 & 22.67 & 26.27 & 66.00 & 65.54 & 47.91 & 8.00 & 70.00 & 56.48 & 45.42 & 32.43 \\

\textbf{ParallelComp}        & \textbf{29.45} & \textbf{45.98} & \textbf{50.67} & 48.36 & \textbf{46.56} & 23.32 & 32.60 & \textbf{24.29} & \textbf{27.34} & 38.50 & 86.72 & 25.93 & 0.05 & \textbf{95.00} & 14.15 & 21.42 & 38.15 \\
\bottomrule
\end{tabular}}
\caption{Performance comparison across different methods. APE X+Y indicates temperature = X and scaling factor = Y. StarAttn4K and StarAttn6K represent StarAttention using chunk sizes of 4K and 6K, respectively.}
\label{tab:main-results}
\end{table}

In order to further compare with the full-size context models StarAttention and APE, we conducted ablation experiments as shown in Table~\ref{tab:main-results}. We find that even after carefully tuning the hyperparameters of both models, their average performance only surpasses that of our \textsc{ParallelComp} by 0.32\%, which further demonstrates the effectiveness of our method.

\end{document}